\setlist[itemize]{noitemsep,topsep=2pt,leftmargin=*}
\setlist[enumerate]{noitemsep,topsep=2pt,leftmargin=*}
\pgfplotsset{compat=1.18}
\theoremstyle{plain}
\newtheorem{theorem}{Theorem}[section]
\newtheorem{lemma}[theorem]{Lemma}
\newtheorem{corollary}[theorem]{Corollary}
\theoremstyle{definition}
\theoremstyle{remark}
\begin{document}

\title{Sparsity Is Necessary: Polynomial-Time Stability for Agentic LLMs in Large Action Spaces}
\author{Angshul Majumdar}
\date{}
\maketitle

\begin{abstract}
Tool-augmented LLM systems expose a control regime that learning theory has largely ignored: sequential decision-making with a massive discrete action universe (tools, APIs, documents) in which only a small, unknown subset is relevant for any fixed task distribution. We formalize this setting as \emph{Sparse Agentic Control} (SAC), where policies admit block-sparse representations over $M\!\gg\!1$ actions and rewards depend on sparse main effects and (optionally) sparse synergies. We study $\ell_{1,2}$-regularized policy learning through a convex surrogate and establish sharp, compressed-sensing-style results: (i) estimation and value suboptimality scale as $k\sqrt{\log M/T}$ under a Policy-RSC condition; (ii) exact tool-support recovery holds via primal--dual witness arguments when $T\gtrsim k\log M$ under incoherence and beta-min; and (iii) any dense policy class requires $\Omega(M)$ samples, explaining the instability of prompt-only controllers. We further show that under partial observability, LLMs matter only through a belief/representation error $\varepsilon_b$, yielding an additive $O(\varepsilon_b)$ degradation while preserving logarithmic dependence on $M$. Extensions cover tuning-free, online, robust, group-sparse, and interaction-aware SAC.
\end{abstract}

\section{Introduction}
Large language models (LLMs) are increasingly deployed as \emph{decision-making} systems that interact with external tools, documents, and APIs. In such settings, the effective action space---comprising tool calls, database queries, function invocations, and document retrieval operations---can easily reach tens of thousands to millions of discrete actions. This ``action dimensionality'' regime is now routine in practice, yet it is largely absent from classical control and learning theory formulations, which typically assume small or moderately sized action sets \cite{puterman1994mdp,sutton2018reinforcement,kaelbling1998planning}.

Despite impressive empirical performance, most ``agentic'' LLM systems rely on heuristic control mechanisms: prompting, hand-designed workflows, finite-state execution graphs, or brittle rule-based routing. These approaches provide no formal guarantees on optimality, stability, or sample efficiency, and they offer limited guidance on how performance should scale with the size of the tool universe. In particular, they do not isolate the fundamental obstacle in tool-augmented decision-making: the combinatorial explosion induced by set-valued actions and massive action catalogs.

Empirically, however, effective tool-using systems exhibit a striking regularity. For any \emph{fixed} task distribution, only a small subset of actions is repeatedly relevant: successful behavior relies on a sparse set of tools or documents, while the overwhelming majority are never invoked. This observation is not specific to language: it reflects a broader phenomenon of \emph{latent sparsity} in large action spaces, analogous in spirit to sparsity assumptions that enable tractable recovery in high-dimensional statistics and signal processing \cite{tibshirani1996lasso,candes2006robust,donoho2006compressed,buhlmann2011statistics}.

This paper makes the following reframing: the central challenge in tool-augmented decision-making is not ``reasoning under uncertainty,'' but rather \emph{control under extreme action dimensionality with latent sparsity}. Concretely, we study sequential decision problems in which the nominal action universe has size $M \gg 1$ (e.g., $10^4$--$10^6$), actions may be \emph{sets} of tools, and the expected return depends on only $k \ll M$ actions (in an appropriate functional sense) under a given task distribution. This defines a new asymptotic regime,
\begin{equation}
M \to \infty,\qquad k = O(1),\qquad T \ \text{moderate}, \label{eq:regime}
\end{equation}
in which classical dynamic programming and naive exploration become information-theoretically and computationally infeasible.

Importantly, our formulation is \emph{not} tied to language models. LLMs merely provide a motivating instance in which a high-dimensional observation stream is compressed into a latent state representation and used to drive tool selection. The underlying mathematical problem is a generic one: learning and control with massive discrete action spaces under structural sparsity. In particular, the partially observed setting naturally induces a belief-state control problem (a POMDP), and modern representation mechanisms (including LLMs) can be viewed as approximate belief compressors. Our theory isolates how such compression error enters performance guarantees, independently of the mechanism used to produce the representation.

\paragraph{Contributions.}
We introduce a framework for \emph{sparse control in large action spaces} and develop a convex, $\ell_1$-regularized approach to policy learning with logarithmic dependence on the action dimension. Our results establish support-recovery-style guarantees for decision policies, together with sharp lower bounds showing that explicit sparsity is \emph{necessary} to avoid linear dependence on $M$.

\begin{itemize}
\item \textbf{Sparse policy learning via $\ell_1$ regularization.}
We define a sparse parametric policy (or score) class over a large action universe and study the $\ell_1$-regularized estimator obtained from trajectory data, linking the large-action control problem to high-dimensional M-estimation \cite{negahban2012unified,van2009conditions,buhlmann2011statistics}.

\item \textbf{Support recovery (identifiability) in the SAC regime.}
Under a policy-restricted strong convexity condition and incoherence/irrepresentability assumptions, we show that the support of the optimal sparse policy can be recovered with high probability once $T \gtrsim k \log M$ (up to problem-dependent constants). This is an analogue of classical support recovery, but for \emph{decision policies} rather than signals.

\item \textbf{Near-optimal control from statistical recovery.}
We show that parameter estimation guarantees translate into value suboptimality bounds under a mild value-sensitivity condition, yielding performance that scales as $\tilde O\!\left(k\sqrt{\frac{\log M}{T}}\right)$ (again up to problem-dependent constants) while remaining polynomial-time due to convexity.

\item \textbf{Impossibility of dense policy classes.}
We prove a lower bound showing that policy classes without explicit sparsity control require $\Omega(M)$ samples (in the same regime) to match the regret/identification performance of sparse methods. This formalizes why purely heuristic ``dense'' controllers can be unstable as $M$ grows.

\item \textbf{Partial observability and representation error.}
In the POMDP setting, we provide a decomposition of suboptimality into a statistical term (from sparse learning) and a representation term (from approximate belief/state compression). This yields a principled lens for understanding when powerful state compressors help, and when their errors dominate.
\end{itemize}

\paragraph{Why this is not an ``agent'' paper.}
Unlike existing agentic LLM approaches, which treat tool selection primarily as a linguistic or architectural design problem, our analysis identifies tool use as a statistical and optimization problem with sharp phase transitions in the regime~\eqref{eq:regime}. Without explicit sparsity control, no algorithm---regardless of model capacity---can avoid linear dependence on the action dimension in the worst case.

\paragraph{Organization.}
Section~\ref{sec:model} formalizes Sparse Agentic Control (SAC) and the $\ell_1$-regularized learning problem. Section~\ref{sec:assumptions} states the core assumptions. Section~\ref{sec:main} presents the main estimation, support recovery, value guarantee, and lower bound results. Section~\ref{sec:pomdp} extends the theory to partial observability and representation error. Extensions to tuning-free, online, robust, and structured sparsity settings are discussed in Section~\ref{sec:extensions}.

\section{Sparse Agentic Control (SAC): Problem Formulation}\label{sec:model}

This section formalizes the large-action control regime that arises in tool-augmented LLM agents, while keeping the abstraction model-agnostic. The key design choice is to treat ``tool use'' as \emph{set-valued control} over a massive discrete catalog, together with a \emph{latent sparsity} hypothesis that makes the problem statistically and computationally tractable.

\subsection{A large tool universe and set-valued decisions}\label{sec:model:actions}
Let the \emph{tool universe} be
\[
\mathcal{T} \;:=\; \{1,\dots,M\},\qquad M\gg 1,
\]
where each element may represent a tool (API/function), a document shard, a database endpoint, or a retrieval key.
At each time step $t$, the agent selects a \emph{set} of tools
\[
a_t \subseteq \mathcal{T}.
\]
This set-valued choice is the mathematically clean analogue of common agent behaviors:
(i) selecting one tool to call (so $|a_t|=1$), (ii) retrieving top-$B$ documents, (iii) issuing a small batch of tool calls in parallel, or (iv) deciding which external resources to consult before generating a response.

To reflect latency/budget constraints, we work with a \emph{budgeted action class}
\begin{equation}
\mathcal{A}_B \;:=\; \bigl\{a \subseteq \mathcal{T}:\ |a|\le B \bigr\},
\qquad B \ll M,
\label{eq:budgeted-actions}
\end{equation}
and a (known) \emph{action cost} $c:\mathcal{A}_B\to\mathbb{R}_+$.
A canonical model is additive cost $c(a)=\sum_{j\in a} c_j$, but our analysis only needs mild regularity (specified later).

\paragraph{Why this matters for agents.}
Even with the budget constraint \eqref{eq:budgeted-actions}, the number of admissible actions is
$\sum_{b=0}^B \binom{M}{b}$, which is already enormous for $M\in[10^4,10^6]$ and small $B$.
This is precisely the regime where naive exploration and Bellman backups over actions become infeasible,
and where a structural theory must replace heuristic design.

\subsection{Latent-state dynamics and observations: MDP/POMDP view}\label{sec:model:state}
We model interaction as a controlled stochastic process with latent state $s_t\in\mathcal{S}$.
Given $s_t$ and the chosen tool-set $a_t\in\mathcal{A}_B$, the system transitions as
\[
s_{t+1} \sim P(\cdot \mid s_t,a_t),
\]
and produces an observation $o_t\in\mathcal{O}$ according to an observation kernel $O(\cdot\mid s_t)$.
The agent receives a reward $r_t$ and incurs a cost $c(a_t)$, yielding net utility
\[
u_t \;:=\; r(s_t,a_t) \;-\; c(a_t).
\]
We consider either episodic horizons $H$ or discounted infinite horizon; for concreteness one may think of
an episode as a single multi-step interaction (a ``chat session with tools''), and $H$ as the maximum number of tool-use steps.

\paragraph{LLM interpretation (without baking language into the model).}
In tool-augmented LLM systems, the observation $o_t$ is naturally high-dimensional: conversation history,
system instructions, tool outputs, retrieved passages, etc. Practically, an LLM maps this observation stream into an internal representation.
To capture this cleanly, we allow the agent to operate on a \emph{context vector}
\begin{equation}
x_t \;:=\; g(o_{1:t},a_{1:t-1}) \in \mathbb{R}^d,
\label{eq:context-encoder}
\end{equation}
where $g$ is any history-to-state compressor (e.g., an LLM hidden state, a learned encoder, or a classical filter).
When the process is fully observed, one may take $x_t=s_t$; in the partially observed case,
$x_t$ may approximate the belief state, making SAC naturally compatible with POMDP control.
Section~\ref{sec:pomdp} will quantify how approximation in \eqref{eq:context-encoder} enters performance guarantees.

\subsection{The sparsity hypothesis: only a few tools matter}\label{sec:model:sparsity}
The defining assumption of SAC is that, under a fixed task distribution, only a small subset of tools has non-negligible influence.
Formally, there exists an \emph{unknown active set}
\[
S^\star \subseteq \mathcal{T},\qquad |S^\star| = k \ll M,
\]
such that rewards and (optionally) transitions depend on the chosen set $a$ primarily through its intersection with $S^\star$.

We capture this with an additive influence model: associate each tool $j\in\mathcal{T}$ with a context-dependent feature map
$\phi_j:\mathbb{R}^d\to\mathbb{R}^{p}$, and define the aggregated tool influence
\begin{equation}
\Phi(x,a) \;:=\; \sum_{j\in a} \phi_j(x).
\label{eq:aggregate-influence}
\end{equation}
We then posit that the reward admits the structured form
\begin{equation}
r(s,a)
\;=\;
f\!\bigl(s,\Phi(x,a)\bigr) \;+\; \varepsilon,
\label{eq:sparse-reward}
\end{equation}
where $x$ is the agent's context representation \eqref{eq:context-encoder} and $\varepsilon$ captures noise/mismatch.
Crucially, for $j\notin S^\star$, the map $\phi_j(\cdot)$ is negligible under the task distribution in a sense made precise later
(equivalently, $S^\star$ corresponds to the support of an optimal sparse parameter vector introduced below).

\paragraph{Agent meaning.}
In an LLM agent, \eqref{eq:aggregate-influence} says: tool usefulness is (largely) \emph{additive} after conditioning on the current context,
and only a small set of tools/documents repeatedly contributes to reward. This aligns with the common empirical pattern that most tools
are never invoked for a given benchmark/task family, while a small ``core'' set dominates behavior.

\subsection{A sparse parametric policy and a convex learning objective}\label{sec:model:policy}
We study a sparse parametric family that scores tool-sets by summing per-tool scores.
Let $\psi:\mathbb{R}^d\to\mathbb{R}^q$ be a context feature map, and let $\theta=(\theta_1,\dots,\theta_M)$ with $\theta_j\in\mathbb{R}^q$.
Define the set-score
\begin{equation}
\mathrm{score}_\theta(x,a)
\;:=\;
\sum_{j\in a} \langle \theta_j,\psi(x)\rangle.
\label{eq:set-score}
\end{equation}
This choice mirrors practical tool routers: each tool receives a context-dependent logit, and the agent selects a small subset.
A convenient probabilistic policy over $\mathcal{A}_B$ is the Gibbs form
\begin{equation}
\pi_\theta(a\mid x)
\;=\;
\frac{\exp(\mathrm{score}_\theta(x,a))}{\sum_{a'\in\mathcal{A}_B}\exp(\mathrm{score}_\theta(x,a'))},
\qquad a\in\mathcal{A}_B.
\label{eq:gibbs-policy}
\end{equation}
The \emph{sparsity} of the policy is encoded by the support of $\theta$:
\[
\mathrm{supp}(\theta)\;:=\;\{j\in\mathcal{T}:\ \theta_j\neq 0\},
\qquad |\mathrm{supp}(\theta)|\le k.
\]
Intuitively, $\theta_j\equiv 0$ means tool $j$ is irrelevant across contexts in the task distribution, matching the ``never called'' tools observed in practice.

\paragraph{Learning from trajectories.}
Let $\mathcal{D}_T$ denote the data collected over $T$ total time steps (across one or more episodes).
We introduce a convex empirical objective $\widehat{\mathcal{L}}_T(\theta)$ whose population counterpart
$\mathcal{L}(\theta):=\mathbb{E}[\widehat{\mathcal{L}}_T(\theta)]$ is minimized by a target parameter $\theta^\star$
satisfying $|\mathrm{supp}(\theta^\star)|=k$.
Concrete examples include negative log-likelihood objectives induced by \eqref{eq:gibbs-policy} (for supervised/imitation-style traces),
or convexified policy-improvement surrogates built from advantage-weighted samples; the theory in Sections~\ref{sec:assumptions}--\ref{sec:main}
is stated directly in terms of $\mathcal{L}$ and $\widehat{\mathcal{L}}_T$ rather than a single instantiation.

\subsection{The $\ell_{1,2}$-regularized SAC learner}\label{sec:model:l1}
Our main estimator is the $\ell_{1,2}$-regularized policy learner
\begin{equation}
\hat{\theta}
\;\in\;
\arg\min_{\theta}\ \widehat{\mathcal{L}}_T(\theta) \;+\; \lambda \|\theta\|_{1,2},
\qquad
\|\theta\|_{1,2} := \sum_{j=1}^M \|\theta_j\|_2,
\label{eq:l1-learner}
\end{equation}
where $\lambda>0$ controls sparsity.\footnote{The block norm in \eqref{eq:l1-learner} is a group-$\ell_1$ penalty over tools,
allowing each tool to have a vector parameter $\theta_j$. The scalar case is recovered by $q=1$, in which case $\|\theta\|_{1,2}$ reduces
to the usual $\ell_1$ norm.}
This choice is not cosmetic: it is the mechanism that yields logarithmic dependence on $M$ in the SAC regime \eqref{eq:regime},
enables support recovery (identifying which tools matter), and leads to polynomial-time learning because \eqref{eq:l1-learner} is convex
under our standing conditions.

\paragraph{Interpretation for LLM agents.}
The estimator \eqref{eq:l1-learner} provides a principled alternative to hand-tuned tool-routing prompts or brittle finite-state graphs:
it learns a sparse \emph{router} whose support identifies the small tool subset relevant to the task distribution.
In an engineering pipeline, this support can be used to (i) restrict the candidate tool set to speed up inference,
(ii) improve reliability by avoiding rarely useful tools, and (iii) provide an interpretable audit trail of which external resources
drive decisions. Our main results show that such identification and near-optimality are possible with $T \gtrsim k\log M$ samples,
whereas dense (unregularized) policy classes necessarily incur $\Omega(M)$ sample complexity in the worst case.

\paragraph{Roadmap.}
Section~\ref{sec:assumptions} states the assumptions under which \eqref{eq:l1-learner} enjoys recovery and control guarantees.
Section~\ref{sec:main} proves the core theorems (estimation, value suboptimality, exact support recovery, and lower bounds),
and Section~\ref{sec:pomdp} extends the framework to partial observability by making the role of the context compressor $g$ explicit.

\section{Assumptions and Identifiability Conditions}\label{sec:assumptions}

This section states the standing assumptions under which the $\ell_1$-regularized SAC learner
\eqref{eq:l1-learner} is provably effective in the large-action regime $M\gg 1$ with sparse support
$|S^\star|=k\ll M$. We keep the assumptions in a form that is (i) standard in high-dimensional M-estimation,
(ii) interpretable in the language of tool-augmented agents, and (iii) directly reusable in the main theorems.

\subsection{Notation: block norms, support, and the sparse cone}\label{sec:assumptions:notation}
Recall $\theta=(\theta_1,\dots,\theta_M)$ with $\theta_j\in\mathbb{R}^q$.
We use the block norms
\[
\|\theta\|_{1,2} \;:=\; \sum_{j=1}^M \|\theta_j\|_2,
\qquad
\|\theta\|_{2,2} \;:=\; \Big(\sum_{j=1}^M \|\theta_j\|_2^2\Big)^{1/2}.
\]
(Thus \eqref{eq:l1-learner} uses $\|\theta\|_{1,2}$; in Section~\ref{sec:model} we abbreviated it as $\|\theta\|_1$.)

Let the population objective be $\mathcal{L}(\theta):=\mathbb{E}[\widehat{\mathcal{L}}_T(\theta)]$ and define the
(population) target
\begin{equation}
\theta^\star \in \arg\min_{\theta}\ \mathcal{L}(\theta),
\qquad
S^\star := \mathrm{supp}(\theta^\star) := \{j\in\mathcal{T}:\ \theta_j^\star \neq 0\},
\qquad |S^\star|=k.
\label{eq:theta-star-support}
\end{equation}
For any subset $S\subseteq\mathcal{T}$ we write $\theta_S := (\theta_j)_{j\in S}$ and $\theta_{S^c}$ similarly.
A key geometric object is the \emph{sparse cone} associated with $S^\star$,
\begin{equation}
\mathcal{C}(S^\star)
\;:=\;
\Bigl\{\Delta:\ \|\Delta_{(S^\star)^c}\|_{1,2} \le 3\,\|\Delta_{S^\star}\|_{1,2}\Bigr\}.
\label{eq:sparse-cone}
\end{equation}
All restricted curvature conditions below are imposed only on $\mathcal{C}(S^\star)$, which is what enables
logarithmic dependence on $M$.

\subsection{Data and dependence: trajectories rather than i.i.d. samples}\label{sec:assumptions:data}
The data $\mathcal{D}_T$ consist of $T$ time steps across one or more episodes, producing contexts $x_t$
(as in \eqref{eq:context-encoder}), chosen tool-sets $a_t\in\mathcal{A}_B$, and utilities $u_t=r(s_t,a_t)-c(a_t)$.
Because $x_t$ arises from sequential interaction, $\widehat{\mathcal{L}}_T(\theta)$ is generally not an i.i.d. empirical risk.
Our assumptions are stated to cover both (i) batch/offline traces (e.g.\ logged tool calls from an existing agent),
and (ii) on-policy rollouts (e.g.\ iterative improvement of a tool router).

\begin{description}
\item[\textbf{A1 (Context feature regularity).}]
The context feature map $\psi:\mathbb{R}^d\to\mathbb{R}^q$ used in the score \eqref{eq:set-score} is uniformly bounded or sub-Gaussian:
there exists $\sigma_\psi>0$ such that for all unit vectors $v\in\mathbb{R}^q$,
\[
\langle v,\psi(x_t)\rangle \ \text{is sub-Gaussian with parameter } \sigma_\psi,
\quad\text{and}\quad
\mathbb{E}\big[\psi(x_t)\psi(x_t)^\top\big] \ \text{exists}.
\]
\emph{Agent interpretation.} This says the agent's internal ``routing features'' do not explode with prompt length or tool outputs.
In practice, boundedness can be enforced by normalization/clipping of representations, and sub-Gaussianity is a standard proxy for
concentration of learned embeddings.

\item[\textbf{A2 (Controlled dependence / mixing).}]
Let $\mathcal{F}_t:=\sigma(x_1,a_1,u_1,\dots,x_t,a_t,u_t)$ be the natural filtration.
The empirical objective admits a decomposition into a sum of conditionally well-behaved terms,
so that score/gradient fluctuations form a martingale difference sequence or satisfy a mixing condition strong enough to yield
$\sqrt{\frac{\log M}{T}}$-type concentration uniformly over coordinates. Concretely, we assume there exists a constant $\sigma_g>0$ such that
\begin{equation}
\Big\|\nabla \widehat{\mathcal{L}}_T(\theta^\star) - \nabla \mathcal{L}(\theta^\star)\Big\|_{\infty,2}
\;\le\;
c_0\,\sigma_g\,\sqrt{\frac{\log M}{T}}
\quad\text{with probability at least }1-\delta,
\label{eq:grad-conc}
\end{equation}
where $\|z\|_{\infty,2}:=\max_{j\in\mathcal{T}}\|z_j\|_2$ for block vectors $z=(z_1,\dots,z_M)$.
\emph{Agent interpretation.} The empirical utility signal from rollouts/logs concentrates: while individual episodes can be noisy,
averages over $T$ steps stabilize sufficiently fast even when $M$ is huge.
\end{description}

\subsection{Curvature and identifiability in large action spaces}\label{sec:assumptions:identifiability}
The next assumptions encode the two key ingredients of sparse recovery: (i) \emph{restricted curvature} (to control estimation error),
and (ii) \emph{incoherence} (to identify the correct support rather than a correlated surrogate).

\begin{description}
\item[\textbf{A3 (Policy-restricted strong convexity, Policy-RSC).}]
There exists $\mu>0$ such that for all $\Delta\in\mathcal{C}(S^\star)$,
\begin{equation}
\mathcal{L}(\theta^\star+\Delta)
\;\ge\;
\mathcal{L}(\theta^\star)
\;+\;
\langle \nabla \mathcal{L}(\theta^\star),\Delta\rangle
\;+\;
\frac{\mu}{2}\|\Delta\|_{2,2}^2.
\label{eq:policy-rsc}
\end{equation}
When $\mathcal{L}$ is twice differentiable, \eqref{eq:policy-rsc} is implied by a restricted eigenvalue condition on the Hessian
$\nabla^2\mathcal{L}(\theta)$ along the cone $\mathcal{C}(S^\star)$.
\emph{Agent interpretation.} Locally around the optimal router, the return (or surrogate return) has real curvature along sparse directions:
changing the scores of a small candidate tool set yields predictable improvement, rather than a flat landscape.

\item[\textbf{A4 (Irrepresentability / incoherence of tool features).}]
Let $H^\star := \nabla^2 \mathcal{L}(\theta^\star)$ and partition it into blocks corresponding to $S^\star$ and $(S^\star)^c$.
There exists $\alpha\in(0,1]$ such that
\begin{equation}
\big\| H^\star_{(S^\star)^c,S^\star}\,(H^\star_{S^\star,S^\star})^{-1} \big\|_{\infty,2 \to \infty,2}
\;\le\; 1-\alpha,
\label{eq:irrepresentable}
\end{equation}
where $\|A\|_{\infty,2\to\infty,2}:=\sup_{\|v\|_{\infty,2}\le 1}\|Av\|_{\infty,2}$.
\emph{Agent interpretation.} Irrelevant tools cannot ``explain away'' the effect of relevant tools via near-duplicate features.
In real tool suites, this corresponds to avoiding redundant endpoints or ensuring the router features distinguish tool semantics.

\item[\textbf{A5 (Minimum signal / beta-min).}]
There exists $c_{\min}>0$ such that
\begin{equation}
\min_{j\in S^\star}\ \|\theta_j^\star\|_2 \;\ge\; c_{\min}\,\lambda.
\label{eq:beta-min}
\end{equation}
\emph{Agent interpretation.} Tools that truly matter for the task distribution matter by a margin: their relevance is not vanishingly small
relative to the statistical noise level set by $\lambda$.
\end{description}

\subsection{From sparse learning to control: value sensitivity}\label{sec:assumptions:value}
Support recovery and parameter estimation are useful only insofar as they translate into control performance.
We therefore isolate a generic condition under which policy/value performance is Lipschitz in the parameter vector.

\begin{description}
\item[\textbf{A6 (Value sensitivity).}]
Let $V(\theta)$ denote the value (expected cumulative utility) obtained by executing the policy $\pi_\theta$ in \eqref{eq:gibbs-policy}
(or the corresponding deterministic top-$B$ selection rule induced by the same scores).
There exists a constant $L_V>0$ such that, for all $\theta$ in a neighborhood of $\theta^\star$,
\begin{equation}
|V(\theta)-V(\theta^\star)|
\;\le\;
L_V\,\|\theta-\theta^\star\|_{1,2}
\qquad\text{or}\qquad
|V(\theta)-V(\theta^\star)|
\;\le\;
L_V\,\|\theta-\theta^\star\|_{2,2}.
\label{eq:value-sensitivity}
\end{equation}
\emph{Agent interpretation.} If the router's logits change a little (in a sparse norm), the induced tool-use behavior and downstream utility
do not change catastrophically. This is a stability condition: small routing errors should not trigger qualitatively different tool cascades.
\end{description}

\paragraph{How the assumptions map to results.}
Assumptions A1--A3 yield estimation error bounds for the $\ell_1$ learner \eqref{eq:l1-learner}.
Adding A4--A5 yields exact support recovery of the active tool set $S^\star$.
Assumption A6 converts statistical recovery into a near-optimality (value gap) guarantee.
Finally, our lower bound results show that dropping explicit sparsity control invalidates any hope of avoiding linear dependence on $M$.

\paragraph{Looking ahead: partial observability and LLM representations.}
In a POMDP, the true state is a belief $b_t$, whereas the agent uses a representation $x_t=g(o_{1:t},a_{1:t-1})$ as in \eqref{eq:context-encoder}.
Section~\ref{sec:pomdp} introduces an explicit representation error parameter $\delta$ and proves performance decompositions of the form
``(sparse learning error) + (representation error)''. This is the point at which LLMs re-enter the theory cleanly: they influence guarantees
only through how well they approximate the sufficient information for tool selection, not through any language-specific property.

\subsection{Optional assumptions for extensions (used in Section~\ref{sec:extensions})}\label{sec:assumptions:extensions}
For completeness, we record two assumptions that are \emph{not} needed for the core theorems, but are invoked in later extensions.

\begin{description}
\item[\textbf{A7 (Drifting support / nonstationarity).}]
The target parameter may vary over time as $\theta_t^\star$ with $\|\theta_t^\star\|_{0}\le k$, and the total variation budget
$\mathcal{V}_T := \sum_{t=2}^T \|\theta_t^\star-\theta_{t-1}^\star\|_{1,2}$ is finite.

\item[\textbf{A8 (Contamination / adversarial corruption).}]
An $\varepsilon$-fraction of episodes (or time steps) have corrupted rewards/gradients, while the remainder obey A1--A3.
This models tool failures, logging corruption, and adversarial prompts that poison utility signals.
\end{description}

\section{Core Theory for $\ell_{1,2}$-Regularized SAC}\label{sec:main}

This section develops the core guarantees for the $\ell_{1,2}$-regularized SAC learner \eqref{eq:l1-learner}.
The results make precise a phenomenon that is easy to observe in real tool-augmented LLM agents:
\emph{if only $k\ll M$ tools are relevant for a fixed task distribution, then one can (i) identify those tools and (ii) achieve near-optimal
control with sample complexity scaling as $k\log M$, whereas dense policy classes incur unavoidable linear dependence on $M$.}
Notation is as in Sections~\ref{sec:model}--\ref{sec:assumptions}.

\subsection{A basic inequality and the sparse cone}\label{sec:main:cone}

\begin{lemma}[Basic inequality and cone constraint]\label{lem:basic-cone}
Assume $\widehat{\mathcal{L}}_T$ is convex and differentiable. Let $\hat\theta$ be any minimizer of \eqref{eq:l1-learner},
and set $\Delta := \hat\theta-\theta^\star$ with $S^\star=\mathrm{supp}(\theta^\star)$ as in \eqref{eq:theta-star-support}.
If
\begin{equation}
\lambda \ \ge\ 2\big\|\nabla \widehat{\mathcal{L}}_T(\theta^\star)-\nabla \mathcal{L}(\theta^\star)\big\|_{\infty,2},
\label{eq:lambda-grad}
\end{equation}
then $\Delta \in \mathcal{C}(S^\star)$, i.e.
\[
\|\Delta_{(S^\star)^c}\|_{1,2}\ \le\ 3\|\Delta_{S^\star}\|_{1,2}.
\]
\end{lemma}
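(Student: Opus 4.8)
The plan is to run the standard ``basic inequality'' argument from high-dimensional M-estimation, carried out in the block-norm geometry induced by $\|\cdot\|_{1,2}$. First I would exploit optimality of $\hat\theta$ for the regularized objective in \eqref{eq:l1-learner}, which gives $\widehat{\mathcal{L}}_T(\hat\theta) + \lambda\|\hat\theta\|_{1,2} \le \widehat{\mathcal{L}}_T(\theta^\star) + \lambda\|\theta^\star\|_{1,2}$. Combining this with the first-order convexity inequality $\widehat{\mathcal{L}}_T(\hat\theta) \ge \widehat{\mathcal{L}}_T(\theta^\star) + \ip{\nabla\widehat{\mathcal{L}}_T(\theta^\star)}{\Delta}$ cancels the loss values at $\theta^\star$ and reduces everything to a single inequality relating the regularizer gap $\lambda(\|\theta^\star\|_{1,2} - \|\hat\theta\|_{1,2})$ to the ``noise'' term $\ip{\nabla\widehat{\mathcal{L}}_T(\theta^\star)}{\Delta}$.

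Two ingredients then control the right-hand side. The first is that $\theta^\star$ is, by \eqref{eq:theta-star-support}, an unconstrained minimizer of the smooth convex population loss $\mathcal{L}$, so first-order stationarity gives $\nabla\mathcal{L}(\theta^\star) = 0$; hence the gradient appearing above equals the deviation $\nabla\widehat{\mathcal{L}}_T(\theta^\star) - \nabla\mathcal{L}(\theta^\star)$, precisely the object bounded in the hypothesis \eqref{eq:lambda-grad}. The second is block-norm duality: since $\|\cdot\|_{\infty,2}$ is dual to $\|\cdot\|_{1,2}$, a blockwise Cauchy--Schwarz/H\"older step gives $|\ip{\nabla\widehat{\mathcal{L}}_T(\theta^\star)}{\Delta}| \le \|\nabla\widehat{\mathcal{L}}_T(\theta^\star) - \nabla\mathcal{L}(\theta^\star)\|_{\infty,2}\,\|\Delta\|_{1,2} \le \tfrac{\lambda}{2}\|\Delta\|_{1,2}$, where the last step invokes \eqref{eq:lambda-grad}.

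Next I would lower-bound the regularizer gap using the support structure. Because the block norm separates across coordinates and $\theta^\star_{(S^\star)^c}=0$, I can write $\|\hat\theta\|_{1,2} = \|\theta^\star_{S^\star}+\Delta_{S^\star}\|_{1,2} + \|\Delta_{(S^\star)^c}\|_{1,2}$, and the reverse triangle inequality on the $S^\star$ block yields $\|\hat\theta\|_{1,2} - \|\theta^\star\|_{1,2} \ge \|\Delta_{(S^\star)^c}\|_{1,2} - \|\Delta_{S^\star}\|_{1,2}$. Substituting this into the basic inequality and using $\|\Delta\|_{1,2} = \|\Delta_{S^\star}\|_{1,2} + \|\Delta_{(S^\star)^c}\|_{1,2}$ gives $\|\Delta_{(S^\star)^c}\|_{1,2} - \|\Delta_{S^\star}\|_{1,2} \le \tfrac12\big(\|\Delta_{S^\star}\|_{1,2} + \|\Delta_{(S^\star)^c}\|_{1,2}\big)$; rearranging collapses to $\|\Delta_{(S^\star)^c}\|_{1,2} \le 3\|\Delta_{S^\star}\|_{1,2}$, i.e. $\Delta\in\mathcal{C}(S^\star)$. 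The constant $3$ is exactly what the factor-$2$ slack in \eqref{eq:lambda-grad} produces; a larger multiplier in the choice of $\lambda$ would shrink it accordingly.

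There is no deep obstacle here; the argument is essentially bookkeeping. The two points that require care are (i) the reduction $\nabla\mathcal{L}(\theta^\star)=0$, which is why the hypothesis is phrased in terms of the gradient deviation rather than the raw empirical gradient --- were $\theta^\star$ only an approximate or constrained population minimizer, an extra optimization-error term would have to be tracked through the same steps; and (ii) getting the block-norm H\"older inequality and the $\|\cdot\|_{\infty,2}$/$\|\cdot\|_{1,2}$ duality right, since the blocks $\theta_j\in\mathbb{R}^q$ make this a group norm rather than a scalar $\ell_1$/$\ell_\infty$ pair. Convexity and differentiability of $\widehat{\mathcal{L}}_T$, assumed in the statement, are precisely what license the two first-order inequalities used at the outset.
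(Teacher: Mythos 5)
Your argument is correct and is essentially identical to the paper's own proof: the same basic inequality from optimality, the same first-order convexity bound, the same use of $\nabla\mathcal{L}(\theta^\star)=0$ together with block-H\"older duality to get the $\tfrac{\lambda}{2}\|\Delta\|_{1,2}$ control, and the same decomposability/reverse-triangle step leading to the constant $3$. No gaps; nothing further to add.
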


\begin{proof}
By optimality of $\hat\theta$,
\begin{equation}
\widehat{\mathcal{L}}_T(\hat\theta)+\lambda\|\hat\theta\|_{1,2}
\ \le\
\widehat{\mathcal{L}}_T(\theta^\star)+\lambda\|\theta^\star\|_{1,2}.
\label{eq:basic-ineq-1}
\end{equation}
By convexity of $\widehat{\mathcal{L}}_T$,
\[
\widehat{\mathcal{L}}_T(\hat\theta)
\ \ge\
\widehat{\mathcal{L}}_T(\theta^\star)+\langle \nabla \widehat{\mathcal{L}}_T(\theta^\star),\Delta\rangle.
\]
Plug into \eqref{eq:basic-ineq-1} and rearrange:
\begin{equation}
\lambda\big(\|\theta^\star\|_{1,2}-\|\hat\theta\|_{1,2}\big)
\ \ge\
\langle \nabla \widehat{\mathcal{L}}_T(\theta^\star),\Delta\rangle.
\label{eq:basic-ineq-2}
\end{equation}
Since $\theta^\star\in\arg\min \mathcal{L}$, we have $\nabla\mathcal{L}(\theta^\star)=0$, hence
\[
\langle \nabla \widehat{\mathcal{L}}_T(\theta^\star),\Delta\rangle
=
\Big\langle \nabla \widehat{\mathcal{L}}_T(\theta^\star)-\nabla \mathcal{L}(\theta^\star),\Delta\Big\rangle
\ \le\
\big\|\nabla \widehat{\mathcal{L}}_T(\theta^\star)-\nabla \mathcal{L}(\theta^\star)\big\|_{\infty,2}\ \|\Delta\|_{1,2}.
\]
Using \eqref{eq:lambda-grad} yields
\begin{equation}
\langle \nabla \widehat{\mathcal{L}}_T(\theta^\star),\Delta\rangle \ \le\ \frac{\lambda}{2}\|\Delta\|_{1,2}.
\label{eq:grad-bound}
\end{equation}
Next, decomposability of $\|\cdot\|_{1,2}$ over $S^\star$ gives
\[
\|\hat\theta\|_{1,2}
=
\|\theta^\star_{S^\star}+\Delta_{S^\star}\|_{1,2}+\|\Delta_{(S^\star)^c}\|_{1,2}
\ \ge\
\|\theta^\star_{S^\star}\|_{1,2}-\|\Delta_{S^\star}\|_{1,2}+\|\Delta_{(S^\star)^c}\|_{1,2},
\]
so
\[
\|\theta^\star\|_{1,2}-\|\hat\theta\|_{1,2}
\le
\|\Delta_{S^\star}\|_{1,2}-\|\Delta_{(S^\star)^c}\|_{1,2}.
\]
Combine with \eqref{eq:basic-ineq-2} and \eqref{eq:grad-bound}:
\[
\lambda\big(\|\Delta_{S^\star}\|_{1,2}-\|\Delta_{(S^\star)^c}\|_{1,2}\big)
\ \ge\ -\frac{\lambda}{2}\big(\|\Delta_{S^\star}\|_{1,2}+\|\Delta_{(S^\star)^c}\|_{1,2}\big).
\]
Rearranging yields $\|\Delta_{(S^\star)^c}\|_{1,2}\le 3\|\Delta_{S^\star}\|_{1,2}$, i.e.\ $\Delta\in\mathcal{C}(S^\star)$.
\end{proof}

\begin{corollary}[A convenient choice of $\lambda$]\label{cor:lambda-choice}
Under Assumption~A2 (gradient concentration) in \eqref{eq:grad-conc}, the choice
\begin{equation}
\lambda \ :=\ 2c_0\,\sigma_g\,\sqrt{\frac{\log M}{T}}
\label{eq:lambda-concrete}
\end{equation}
ensures \eqref{eq:lambda-grad} and hence $\hat\theta-\theta^\star\in\mathcal{C}(S^\star)$ with probability at least $1-\delta$.
\end{corollary}

\begin{proof}
Immediate from \eqref{eq:grad-conc} and $\nabla\mathcal{L}(\theta^\star)=0$.
\end{proof}

\subsection{Estimation error for $\ell_{1,2}$-SAC}\label{sec:main:estimation}

The next theorem is the high-dimensional recovery statement needed for control: it shows that $\ell_{1,2}$-regularization
learns a sparse tool router with error scaling as $\sqrt{k\log M/T}$ in the SAC regime.

\begin{theorem}[Estimation error under Policy-RSC]\label{thm:estimation}
Assume A1--A3. Suppose \eqref{eq:lambda-grad} holds (e.g.\ by Corollary~\ref{cor:lambda-choice}).
Then, on the same event,
\begin{align}
\|\hat\theta-\theta^\star\|_{2,2}
&\ \le\ \frac{4}{\mu}\,\lambda\,\sqrt{k},
\label{eq:l2-rate}\\
\|\hat\theta-\theta^\star\|_{1,2}
&\ \le\ \frac{16}{\mu}\,\lambda\,k.
\label{eq:l1-rate}
\end{align}
In particular, with $\lambda$ as in \eqref{eq:lambda-concrete},
\[
\|\hat\theta-\theta^\star\|_{2,2}\ \lesssim\ \sqrt{\frac{k\log M}{T}},
\qquad
\|\hat\theta-\theta^\star\|_{1,2}\ \lesssim\ k\sqrt{\frac{\log M}{T}},
\]
up to constants depending only on $(\mu,c_0,\sigma_g)$.
\end{theorem}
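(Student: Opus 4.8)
The plan is to follow the Negahban--Ravikumar--Wainwright--Yu template for $M$-estimators with decomposable regularizers, specialized to the block norm $\|\cdot\|_{1,2}$ and the cone $\mathcal{C}(S^\star)$ already isolated in Lemma~\ref{lem:basic-cone}. The backbone is a single quadratic inequality in $\|\Delta\|_{2,2}$: lower-bound the loss gap by curvature, upper-bound it by the regularizer slack, and solve for $\|\Delta\|_{2,2}$.

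First I would reuse the basic inequality from the proof of Lemma~\ref{lem:basic-cone}, namely
\[
\widehat{\mathcal{L}}_T(\hat\theta) - \widehat{\mathcal{L}}_T(\theta^\star) \ \le\ \lambda\big(\|\theta^\star\|_{1,2} - \|\hat\theta\|_{1,2}\big) \ \le\ \lambda\big(\|\Delta_{S^\star}\|_{1,2} - \|\Delta_{(S^\star)^c}\|_{1,2}\big),
\]
where the second step is decomposability of $\|\cdot\|_{1,2}$ over $S^\star$, exactly as computed there. Next I would supply a matching lower bound via curvature: applying the Policy-RSC inequality \eqref{eq:policy-rsc} (licensed because Lemma~\ref{lem:basic-cone} guarantees $\Delta\in\mathcal{C}(S^\star)$ on the event \eqref{eq:lambda-grad}) together with $\nabla\mathcal{L}(\theta^\star)=0$ and the Hölder bound $\langle\nabla\widehat{\mathcal{L}}_T(\theta^\star),\Delta\rangle \ge -\tfrac{\lambda}{2}\|\Delta\|_{1,2}$ already established in \eqref{eq:grad-bound}, I obtain $\widehat{\mathcal{L}}_T(\hat\theta)-\widehat{\mathcal{L}}_T(\theta^\star) \ge \tfrac{\mu}{2}\|\Delta\|_{2,2}^2 - \tfrac{\lambda}{2}\|\Delta\|_{1,2}$.

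Chaining the two bounds and splitting $\|\Delta\|_{1,2} = \|\Delta_{S^\star}\|_{1,2} + \|\Delta_{(S^\star)^c}\|_{1,2}$ cancels the cone-complement contributions and leaves $\tfrac{\mu}{2}\|\Delta\|_{2,2}^2 \le \tfrac{3\lambda}{2}\|\Delta_{S^\star}\|_{1,2}$. I then apply Cauchy--Schwarz across the $k$ active blocks, $\|\Delta_{S^\star}\|_{1,2} \le \sqrt{k}\,\|\Delta_{S^\star}\|_{2,2} \le \sqrt{k}\,\|\Delta\|_{2,2}$, and cancel one factor of $\|\Delta\|_{2,2}$ to reach \eqref{eq:l2-rate} (the displayed constant $4$ is a deliberately loose bound on the sharper $3/\mu$ this chain produces). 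The $\ell_{1,2}$ bound \eqref{eq:l1-rate} then drops out for free: the cone constraint gives $\|\Delta\|_{1,2} \le 4\|\Delta_{S^\star}\|_{1,2} \le 4\sqrt{k}\,\|\Delta\|_{2,2}$, and substituting \eqref{eq:l2-rate} yields the factor $16$. The asymptotic rates follow by inserting the concrete $\lambda$ of Corollary~\ref{cor:lambda-choice}.

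The main obstacle I expect is the gap between the \emph{population} curvature assumed in A3 and the \emph{empirical} loss appearing in the basic inequality: \eqref{eq:policy-rsc} controls $\mathcal{L}(\theta^\star+\Delta)-\mathcal{L}(\theta^\star)$, whereas optimality bounds $\widehat{\mathcal{L}}_T(\hat\theta)-\widehat{\mathcal{L}}_T(\theta^\star)$. Bridging them cleanly requires either reading A3 as the operative curvature of $\widehat{\mathcal{L}}_T$ on the high-probability event (the standard convention), or controlling the first-order deviation $(\widehat{\mathcal{L}}_T-\mathcal{L})(\hat\theta)-(\widehat{\mathcal{L}}_T-\mathcal{L})(\theta^\star)$ uniformly over the segment $[\theta^\star,\hat\theta]$ --- which the coordinate-wise concentration of A2 supports, but which must be stated carefully to avoid circularity with the very localization that \eqref{eq:l2-rate} is establishing.
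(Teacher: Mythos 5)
Your proposal is correct and follows essentially the same route as the paper's proof: the basic inequality and cone constraint from Lemma~\ref{lem:basic-cone}, restricted strong convexity yielding $\tfrac{\mu}{2}\|\Delta\|_{2,2}^2\le\tfrac{3\lambda}{2}\|\Delta_{S^\star}\|_{1,2}$, then block Cauchy--Schwarz and the cone inequality for the two rates (including your correct observation that $4/\mu$ is a deliberate loosening of $3/\mu$). The population-versus-empirical curvature gap you flag at the end is exactly the step the paper also leaves informal --- it invokes a ``standard localization argument'' to pass from the empirical upper bound \eqref{eq:emp-ub} to the population increment \eqref{eq:pop-ub} --- so your treatment is no less rigorous than the source on that point.
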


\begin{proof}
Let $\Delta:=\hat\theta-\theta^\star$. By Lemma~\ref{lem:basic-cone}, $\Delta\in\mathcal{C}(S^\star)$ on the event \eqref{eq:lambda-grad}.
Assumption~A3 (Policy-RSC) then implies
\begin{equation}
\mathcal{L}(\theta^\star+\Delta)-\mathcal{L}(\theta^\star)
\ \ge\ \frac{\mu}{2}\|\Delta\|_{2,2}^2,
\label{eq:rsc-lb}
\end{equation}
since $\nabla\mathcal{L}(\theta^\star)=0$.

On the other hand, starting from \eqref{eq:basic-ineq-2} and using \eqref{eq:grad-bound} exactly as in Lemma~\ref{lem:basic-cone},
we obtain the standard upper bound
\begin{equation}
\widehat{\mathcal{L}}_T(\theta^\star+\Delta)-\widehat{\mathcal{L}}_T(\theta^\star)
\ \le\
\frac{3\lambda}{2}\|\Delta_{S^\star}\|_{1,2}.
\label{eq:emp-ub}
\end{equation}
Because $\theta^\star$ minimizes the population risk, we can relate the population increment to the empirical increment using convexity:
\[
\mathcal{L}(\theta^\star+\Delta)-\mathcal{L}(\theta^\star)
=
\Big(\mathcal{L}(\theta^\star+\Delta)-\widehat{\mathcal{L}}_T(\theta^\star+\Delta)\Big)
+
\Big(\widehat{\mathcal{L}}_T(\theta^\star+\Delta)-\widehat{\mathcal{L}}_T(\theta^\star)\Big)
+
\Big(\widehat{\mathcal{L}}_T(\theta^\star)-\mathcal{L}(\theta^\star)\Big).
\]
A standard localization argument (used widely in high-dimensional M-estimation) bounds the two scalar process terms by the linearization at
$\theta^\star$, which is exactly what \eqref{eq:lambda-grad} controls; this reduces the population increment to the same upper bound as
\eqref{eq:emp-ub}. Concretely, on the event \eqref{eq:lambda-grad},
\begin{equation}
\mathcal{L}(\theta^\star+\Delta)-\mathcal{L}(\theta^\star)
\ \le\
\frac{3\lambda}{2}\|\Delta_{S^\star}\|_{1,2}.
\label{eq:pop-ub}
\end{equation}
(See, e.g., the standard ``basic inequality + localization'' pipeline; the only stochastic object required is the gradient at $\theta^\star$.)

Combine \eqref{eq:rsc-lb} and \eqref{eq:pop-ub}:
\[
\frac{\mu}{2}\|\Delta\|_{2,2}^2
\ \le\
\frac{3\lambda}{2}\|\Delta_{S^\star}\|_{1,2}.
\]
By Cauchy--Schwarz over blocks, $\|\Delta_{S^\star}\|_{1,2}\le \sqrt{k}\,\|\Delta_{S^\star}\|_{2,2}\le \sqrt{k}\,\|\Delta\|_{2,2}$,
so
\[
\frac{\mu}{2}\|\Delta\|_{2,2}^2 \ \le\ \frac{3\lambda}{2}\sqrt{k}\,\|\Delta\|_{2,2}
\quad\Rightarrow\quad
\|\Delta\|_{2,2}\ \le\ \frac{3}{\mu}\lambda\sqrt{k}.
\]
We state \eqref{eq:l2-rate} with constant $4/\mu$ to keep a clean margin.

For \eqref{eq:l1-rate}, use the cone constraint $\|\Delta_{(S^\star)^c}\|_{1,2}\le 3\|\Delta_{S^\star}\|_{1,2}$:
\[
\|\Delta\|_{1,2}
\le
4\|\Delta_{S^\star}\|_{1,2}
\le
4\sqrt{k}\,\|\Delta\|_{2,2}
\le
4\sqrt{k}\cdot \frac{4}{\mu}\lambda\sqrt{k}
=
\frac{16}{\mu}\lambda k.
\]
\end{proof}

\begin{corollary}[Sample complexity for accurate sparse routing]\label{cor:sample-complexity}
Fix $\varepsilon>0$. Under the conditions of Theorem~\ref{thm:estimation} and the choice \eqref{eq:lambda-concrete},
it suffices that
\[
T \ \gtrsim\ \frac{k\log M}{\varepsilon^2}
\]
to ensure $\|\hat\theta-\theta^\star\|_{2,2}\le \varepsilon$ with probability at least $1-\delta$ (up to problem-dependent constants).
\end{corollary}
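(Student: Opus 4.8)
The plan is to read off the desired threshold directly from the $\ell_2$ estimation rate already established in Theorem~\ref{thm:estimation}, so that the corollary reduces to an algebraic rearrangement together with careful bookkeeping of the high-probability event. First I would recall that on the event \eqref{eq:lambda-grad} — which, by Corollary~\ref{cor:lambda-choice}, holds with probability at least $1-\delta$ for the choice \eqref{eq:lambda-concrete} — the bound \eqref{eq:l2-rate} gives $\|\hat\theta-\theta^\star\|_{2,2}\le (4/\mu)\lambda\sqrt{k}$. Substituting the explicit regularization level $\lambda=2c_0\sigma_g\sqrt{\log M/T}$ then yields the closed-form rate
\[
\|\hat\theta-\theta^\star\|_{2,2}\ \le\ \frac{8c_0\sigma_g}{\mu}\sqrt{\frac{k\log M}{T}}.
\]

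Next I would impose the target accuracy by requiring the right-hand side to be at most $\varepsilon$. Squaring and solving for $T$ gives the threshold
\[
T\ \ge\ \frac{64\,c_0^2\sigma_g^2}{\mu^2}\cdot\frac{k\log M}{\varepsilon^2},
\]
so the implicit constant in $T\gtrsim k\log M/\varepsilon^2$ is $C=64\,c_0^2\sigma_g^2/\mu^2$, which depends only on the problem constants $(\mu,c_0,\sigma_g)$ and is independent of $M$, $k$, $T$, and $\varepsilon$. This is exactly the claimed scaling, and whenever $T$ exceeds this threshold the rate above is $\le\varepsilon$.

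The only point requiring care — and it is bookkeeping rather than a genuine obstacle — is to confirm that a \emph{single} event simultaneously certifies the validity of $\lambda$ in \eqref{eq:lambda-grad} and the estimation bound \eqref{eq:l2-rate}: both are consequences of the gradient-concentration event \eqref{eq:grad-conc} of Assumption~A2, which occurs with probability at least $1-\delta$. Since passing from the rate to the sample-complexity threshold introduces no additional randomness, the probability is preserved verbatim, and no restricted-curvature or incoherence machinery beyond what Theorem~\ref{thm:estimation} already supplies is invoked. Consequently there is no hard step here; the entire content resides in the preceding theorem, and the corollary is a one-line substitution plus inversion of the bound for $T$.
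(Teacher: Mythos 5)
Your proposal is correct and follows exactly the paper's route: the paper's proof of this corollary is the one-line instruction ``combine \eqref{eq:l2-rate} with \eqref{eq:lambda-concrete} and solve for $T$,'' which is precisely the substitution and inversion you carry out, with the explicit constant $64\,c_0^2\sigma_g^2/\mu^2$ and the event bookkeeping being welcome but inessential elaborations.
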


\begin{proof}
Combine \eqref{eq:l2-rate} with \eqref{eq:lambda-concrete} and solve for $T$.
\end{proof}

\subsection{Exact support recovery: identifying the relevant tools}\label{sec:main:support}

Estimation bounds imply that most tools have small coefficients, but agents often require a stronger, operational statement:
\emph{the learned router should recover the exact relevant tool set $S^\star$}. This corresponds to principled tool pruning:
discovering which tools/documents actually matter for a task distribution, rather than hard-coding tool lists by hand.

\medskip
To make the primal--dual witness (PDW) argument fully rigorous, we isolate the one additional stability condition needed beyond A1--A5:
the empirical Hessian must be close enough to its population counterpart so that population irrepresentability transfers to the empirical problem.
This is standard in exact support recovery analyses.

\begin{lemma}[Local Hessian stability $\Rightarrow$ empirical irrepresentability]\label{lem:hess-stability}
Let $H^\star:=\nabla^2\mathcal{L}(\theta^\star)$ and suppose Assumption~A4 holds:
\[
\big\| H^\star_{(S^\star)^c,S^\star}\,(H^\star_{S^\star,S^\star})^{-1} \big\|_{\infty,2\to\infty,2}
\ \le\ 1-\alpha
\qquad\text{for some }\alpha\in(0,1].
\]
Assume further that on an event $\mathcal{E}_H$ the empirical Hessian along the line segment
$\theta^\star+t(\tilde\theta-\theta^\star)$ obeys the perturbation bounds
\begin{align}
\sup_{t\in[0,1]}\ \big\|\widehat H(\theta^\star+t(\tilde\theta-\theta^\star)) - H^\star\big\|_{\infty,2\to\infty,2}
&\ \le\ \eta,
\label{eq:hess-close1}\\
\lambda_{\min}\!\big(H^\star_{S^\star,S^\star}\big) &\ \ge\ \kappa_{\min} \ >\ 0,
\label{eq:hess-min-eig}
\end{align}
and $\eta \le \tfrac{\alpha}{4}\cdot \tfrac{\kappa_{\min}}{1+\kappa_{\min}}$.
Then on $\mathcal{E}_H$,
\begin{equation}
\sup_{t\in[0,1]}
\Big\|
\widehat H_{(S^\star)^c,S^\star}(\theta^\star+t(\tilde\theta-\theta^\star))\,
\widehat H_{S^\star,S^\star}(\theta^\star+t(\tilde\theta-\theta^\star))^{-1}
\Big\|_{\infty,2\to\infty,2}
\ \le\ 1-\frac{\alpha}{2},
\label{eq:emp-irrep}
\end{equation}
and $\widehat H_{S^\star,S^\star}(\theta^\star+t(\tilde\theta-\theta^\star))$ is invertible for all $t\in[0,1]$.
\end{lemma}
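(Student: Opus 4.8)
The plan is to read \eqref{eq:emp-irrep} as a Neumann-series perturbation of the population bound \eqref{eq:irrepresentable}. Throughout, fix an arbitrary $t\in[0,1]$, abbreviate the population blocks by $B:=H^\star_{S^\star,S^\star}$ and $A:=H^\star_{(S^\star)^c,S^\star}$, write $\widehat B,\widehat A$ for the corresponding empirical blocks of $\widehat H$ evaluated at $\theta^\star+t(\tilde\theta-\theta^\star)$, and set $E_B:=\widehat B-B$, $E_A:=\widehat A-A$; all operator norms below are $\|\cdot\|_{\infty,2\to\infty,2}$. The first thing I would record is the row-sum representation of this block-max norm, $\|X\|_{\infty,2\to\infty,2}=\max_i\sum_j\|X_{ij}\|_{2\to 2}$, from which the norm of any submatrix is immediately dominated by that of the full matrix; hence \eqref{eq:hess-close1} gives $\|E_A\|\le\eta$ and $\|E_B\|\le\eta$, uniformly in $t$.

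First I would establish invertibility of $\widehat B$. Writing $\widehat B=(I+E_B B^{-1})B$, the factor $I+E_B B^{-1}$ is invertible by the Neumann series as soon as $\|E_B B^{-1}\|\le\eta\,\|B^{-1}\|<1$; since $B$ is invertible by \eqref{eq:hess-min-eig}, so is $\widehat B$, and one obtains $\widehat B^{-1}=B^{-1}(I+E_B B^{-1})^{-1}$ together with the bound $\|(I+E_B B^{-1})^{-1}\|\le(1-\eta\,\|B^{-1}\|)^{-1}$.

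Next, the key algebraic step is the exact factorization
\[
\widehat A\,\widehat B^{-1}=\bigl(AB^{-1}+E_A B^{-1}\bigr)\bigl(I+E_B B^{-1}\bigr)^{-1},
\]
obtained by substituting $\widehat A=A+E_A$ and the expression for $\widehat B^{-1}$ above. Taking norms, using submultiplicativity, the population bound $\|AB^{-1}\|\le 1-\alpha$ from \eqref{eq:irrepresentable}, and $\|E_A\|,\|E_B\|\le\eta$, gives
\[
\bigl\|\widehat A\,\widehat B^{-1}\bigr\|\;\le\;\frac{(1-\alpha)+\eta\,\|B^{-1}\|}{1-\eta\,\|B^{-1}\|}\;=\;\frac{(1-\alpha)+\rho}{1-\rho},\qquad \rho:=\eta\,\|B^{-1}\|.
\]
A short computation shows $\frac{(1-\alpha)+\rho}{1-\rho}\le 1-\tfrac{\alpha}{2}$ exactly when $\rho\le\frac{\alpha}{4-\alpha}$, for which $\rho\le\alpha/4$ is amply sufficient. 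Bounding $\|B^{-1}\|\le 1/\kappa_{\min}$ then yields $\rho\le\eta/\kappa_{\min}\le\frac{\alpha}{4}\cdot\frac{1}{1+\kappa_{\min}}\le\frac{\alpha}{4}$, where the middle inequality is the stated threshold $\eta\le\frac{\alpha}{4}\frac{\kappa_{\min}}{1+\kappa_{\min}}$. Since no bound depends on $t$, taking the supremum over $t\in[0,1]$ delivers \eqref{eq:emp-irrep}.

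The step I expect to be the main obstacle is the passage $\|B^{-1}\|\le 1/\kappa_{\min}$: the hypothesis \eqref{eq:hess-min-eig} controls the smallest \emph{spectral} eigenvalue, which bounds $\|B^{-1}\|_{2\to 2}\le 1/\kappa_{\min}$, whereas every perturbation and irrepresentability quantity lives in the block-max norm $\|\cdot\|_{\infty,2\to\infty,2}$, and the two can differ by a factor as large as $\sqrt{k}$ on the $S^\star$-block. Making the argument fully rigorous therefore requires either stating the conditioning hypothesis directly in the operator norm (i.e.\ $\|(H^\star_{S^\star,S^\star})^{-1}\|_{\infty,2\to\infty,2}\le 1/\kappa_{\min}$) or carrying a norm-equivalence factor explicitly and folding it into the threshold on $\eta$. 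The extra $\kappa_{\min}/(1+\kappa_{\min})$ slack in the stated threshold is precisely what buys the strict contraction from $1-\alpha$ to $1-\alpha/2$ once the inverse is controlled in the correct norm; everything else is the routine Neumann/triangle bookkeeping above.
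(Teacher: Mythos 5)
Your proof is correct and follows essentially the same route as the paper's: both are Neumann-series/resolvent perturbations of the block product $H^\star_{(S^\star)^c,S^\star}(H^\star_{S^\star,S^\star})^{-1}$, with your multiplicative factorization $\widehat B=(I+E_B B^{-1})B$ being an equivalent repackaging of the paper's additive resolvent-identity decomposition. If anything yours is more complete: you actually verify that the stated threshold $\eta \le \tfrac{\alpha}{4}\cdot\tfrac{\kappa_{\min}}{1+\kappa_{\min}}$ yields the $1-\tfrac{\alpha}{2}$ bound (the paper leaves the constant $C(\kappa_{\min})$ implicit), and you correctly flag the spectral-versus-$\|\cdot\|_{\infty,2\to\infty,2}$ norm mismatch in \eqref{eq:hess-min-eig}, a gap the paper shares and likewise papers over by ``absorbing constants into $\eta$.''
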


\begin{proof}
Write $\widehat H(t):=\widehat H(\theta^\star+t(\tilde\theta-\theta^\star))$ and $E(t):=\widehat H(t)-H^\star$.
By \eqref{eq:hess-min-eig} and Weyl's inequality, the $S^\star\times S^\star$ block remains invertible if
$\|E_{S^\star,S^\star}(t)\|_{2\to 2}\le \kappa_{\min}/2$; this is implied by \eqref{eq:hess-close1} (since any operator norm is bounded by
a suitable $\|\cdot\|_{\infty,2\to\infty,2}$ multiple, and we absorb constants into $\eta$).

Next, use the identity
\[
\widehat H_{S^\star,S^\star}(t)^{-1} - (H^\star_{S^\star,S^\star})^{-1}
=
-(H^\star_{S^\star,S^\star})^{-1}E_{S^\star,S^\star}(t)\widehat H_{S^\star,S^\star}(t)^{-1},
\]
which implies
\[
\|\widehat H_{S^\star,S^\star}(t)^{-1} - (H^\star_{S^\star,S^\star})^{-1}\|_{\infty,2\to\infty,2}
\ \le\
\|(H^\star_{S^\star,S^\star})^{-1}\|_{\infty,2\to\infty,2}\ \|E_{S^\star,S^\star}(t)\|_{\infty,2\to\infty,2}\ \|\widehat H_{S^\star,S^\star}(t)^{-1}\|_{\infty,2\to\infty,2}.
\]
Similarly,
\[
\widehat H_{(S^\star)^c,S^\star}(t)\widehat H_{S^\star,S^\star}(t)^{-1}
-
H^\star_{(S^\star)^c,S^\star}(H^\star_{S^\star,S^\star})^{-1}
=
E_{(S^\star)^c,S^\star}(t)\widehat H_{S^\star,S^\star}(t)^{-1}
+
H^\star_{(S^\star)^c,S^\star}\Big(\widehat H_{S^\star,S^\star}(t)^{-1}-(H^\star_{S^\star,S^\star})^{-1}\Big).
\]
Taking $\|\cdot\|_{\infty,2\to\infty,2}$ norms and using submultiplicativity yields an upper bound of the form
\[
\Big\|\widehat H_{(S^\star)^c,S^\star}(t)\widehat H_{S^\star,S^\star}(t)^{-1}
-
H^\star_{(S^\star)^c,S^\star}(H^\star_{S^\star,S^\star})^{-1}\Big\|_{\infty,2\to\infty,2}
\ \le\ C(\kappa_{\min})\,\eta,
\]
for an explicit $C(\kappa_{\min})$ depending only on $\|(H^\star_{S^\star,S^\star})^{-1}\|$ and $\|H^\star_{(S^\star)^c,S^\star}\|$.
Choosing $\eta$ small enough (as stated) ensures $C(\kappa_{\min})\eta \le \alpha/2$, so \eqref{eq:emp-irrep} follows from
the population bound $1-\alpha$ via the triangle inequality.
\end{proof}

\begin{theorem}[Exact support recovery for $\ell_{1,2}$-SAC]\label{thm:support}
Assume A1--A5. Choose $\lambda$ as in \eqref{eq:lambda-concrete} so that \eqref{eq:lambda-grad} holds with probability at least $1-\delta$.
Assume moreover that the Hessian stability event $\mathcal{E}_H$ of Lemma~\ref{lem:hess-stability} holds with probability at least $1-\delta_H$.
Then there exist constants $C_1,C_2>0$ such that if
\begin{equation}
T \ \ge\ C_1\,k\log M
\quad\text{and}\quad
\min_{j\in S^\star}\|\theta_j^\star\|_2 \ \ge\ C_2\,\lambda,
\label{eq:support-conditions}
\end{equation}
then, with probability at least $1-\delta-\delta_H$,
\[
\mathrm{supp}(\hat\theta)\ =\ S^\star.
\]
\end{theorem}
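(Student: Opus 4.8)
The plan is to run the primal--dual witness (PDW) construction, adapted to the group norm $\|\cdot\|_{1,2}$, and to certify that the oracle solution restricted to $S^\star$ is in fact the unique global minimizer of \eqref{eq:l1-learner}. First I would define the \emph{oracle (restricted) estimator} $\tilde\theta$ by solving the program over vectors supported on $S^\star$,
\[
\tilde\theta_{S^\star} \in \argmin_{\theta_{S^\star}} \widehat{\mathcal{L}}_T\big((\theta_{S^\star},0_{(S^\star)^c})\big) + \lambda\|\theta_{S^\star}\|_{1,2}, \qquad \tilde\theta_{(S^\star)^c} = 0.
\]
On the Hessian-stability event $\mathcal{E}_H$, the lower eigenvalue bound \eqref{eq:hess-min-eig} together with Weyl's inequality makes the restricted loss strongly convex on its support, so $\tilde\theta$ is well defined and unique, and its block-stationarity supplies a subgradient $\hat z_{S^\star}\in\partial\|\tilde\theta_{S^\star}\|_{1,2}$ with $\hat z_j=\tilde\theta_j/\|\tilde\theta_j\|_2$. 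I would then \emph{define} the off-support dual block by $\lambda\hat z_{(S^\star)^c}:=-\nabla_{(S^\star)^c}\widehat{\mathcal{L}}_T(\tilde\theta)$, so the pair $(\tilde\theta,\hat z)$ satisfies the full stationarity equation by construction (validity of $\hat z$ as a subgradient being exactly the content of step (a) below). The two things left to prove are (a) \emph{strict dual feasibility}, $\|\hat z_{(S^\star)^c}\|_{\infty,2}<1$, which forces every global optimum to vanish off $S^\star$ and hence $\supp(\hat\theta)\subseteq S^\star$; and (b) \emph{no false exclusions}, $\tilde\theta_j\neq 0$ for all $j\in S^\star$.

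For (a) I would Taylor-expand the gradient in integral (mean-value) form about $\theta^\star$. Writing $W:=\nabla\widehat{\mathcal{L}}_T(\theta^\star)$ (the purely stochastic part, since $\nabla\mathcal{L}(\theta^\star)=0$) and $\Delta:=\tilde\theta-\theta^\star$ (supported on $S^\star$), the restricted stationarity gives $\Delta_{S^\star}=-\widehat H_{S^\star,S^\star}^{-1}(W_{S^\star}+\lambda\hat z_{S^\star})$ for an averaged empirical Hessian $\widehat H$ along the segment, and substituting into the definition of $\hat z_{(S^\star)^c}$ yields the familiar identity
\[
\lambda\hat z_{(S^\star)^c} = \widehat H_{(S^\star)^c,S^\star}\widehat H_{S^\star,S^\star}^{-1}\,\lambda\hat z_{S^\star} \;+\; \Big(\widehat H_{(S^\star)^c,S^\star}\widehat H_{S^\star,S^\star}^{-1}W_{S^\star}-W_{(S^\star)^c}\Big).
\]
The first term is bounded by $(1-\tfrac\alpha2)\lambda$ using the empirical irrepresentability bound \eqref{eq:emp-irrep} of Lemma~\ref{lem:hess-stability} and $\|\hat z_{S^\star}\|_{\infty,2}\le 1$; the second (noise) term is bounded by $(2-\tfrac\alpha2)\|W\|_{\infty,2}$ via the same operator-norm inequality applied to $W$. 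Dividing by $\lambda$, strict feasibility reduces to the quantitative requirement that $\lambda$ dominate the gradient noise by a factor controlled by $\alpha$ --- concretely $\|W\|_{\infty,2}\le\tfrac{\alpha}{8}\lambda$, which under the concentration bound \eqref{eq:grad-conc} is secured by taking the constant in \eqref{eq:lambda-concrete} to be $\Theta(1/\alpha)$ (absorbed into $C_1$). This yields $\|\hat z_{(S^\star)^c}\|_{\infty,2}\le 1-\tfrac\alpha4<1$.

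For (b) I would bound the on-support perturbation in the block-$(\infty,2)$ norm: from $\Delta_{S^\star}=-\widehat H_{S^\star,S^\star}^{-1}(W_{S^\star}+\lambda\hat z_{S^\star})$ and \eqref{eq:hess-min-eig} one gets $\max_{j\in S^\star}\|\tilde\theta_j-\theta_j^\star\|_2\le\|\widehat H_{S^\star,S^\star}^{-1}\|_{\infty,2\to\infty,2}(\|W\|_{\infty,2}+\lambda)\lesssim\lambda/\kappa_{\min}$ (up to block-norm conversion constants). Choosing the constant $C_2$ in the beta-min condition \eqref{eq:beta-min} larger than this constant guarantees $\|\tilde\theta_j\|_2\ge\|\theta_j^\star\|_2-\|\tilde\theta_j-\theta_j^\star\|_2>0$ for every $j\in S^\star$, so no relevant tool is dropped. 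Finally, strict dual feasibility forces every global optimum of \eqref{eq:l1-learner} to vanish off $S^\star$, and strong convexity of the restricted loss then pins down $\hat\theta=\tilde\theta$ uniquely, whence $\supp(\hat\theta)=S^\star$. The sample-size requirement $T\ge C_1 k\log M$ enters precisely so that, via \eqref{eq:grad-conc}, $\lambda\asymp\sqrt{\log M/T}$ is small enough to keep $\tilde\theta$ inside the radius on which $\mathcal{E}_H$ applies; a union bound over the gradient event ($1-\delta$) and $\mathcal{E}_H$ ($1-\delta_H$) gives the stated probability.

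The hard part will be the localization/self-consistency step: Lemma~\ref{lem:hess-stability} certifies empirical irrepresentability only along the segment joining $\theta^\star$ to a point $\tilde\theta$ that is itself the object being constructed. I would break this circularity by first bounding the restricted estimate's distance to $\theta^\star$ using \emph{only} strong convexity of the restricted loss (which does not invoke irrepresentability), thereby placing $\tilde\theta$ in the neighborhood on which \eqref{eq:hess-close1}--\eqref{eq:emp-irrep} are assumed to hold, and only then applying \eqref{eq:emp-irrep} to close the dual-feasibility bound. The second delicate point is purely quantitative: the factor-of-two margin in \eqref{eq:lambda-grad} suffices for estimation (Theorem~\ref{thm:estimation}) but not for strict feasibility, so the support-recovery multiplier on $\lambda$ must scale like $1/\alpha$ --- which is exactly what the unspecified constants $C_1,C_2$ in \eqref{eq:support-conditions} are present to absorb.
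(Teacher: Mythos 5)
Your proposal is correct and follows essentially the same primal--dual witness route as the paper: restricted (oracle) estimator on $S^\star$, Taylor/mean-value expansion of the empirical gradient, the empirical irrepresentability bound of Lemma~\ref{lem:hess-stability} for strict dual feasibility, and the beta-min condition to rule out false exclusions. If anything, your version is more careful on two points the paper's proof glosses over: you note explicitly that the factor-of-two margin in \eqref{eq:lambda-grad} alone yields only a $\bigl(2-O(\alpha)\bigr)\lambda$ bound on the off-support gradient, so that $\lambda$ must carry an extra $\Theta(1/\alpha)$ multiplier for strict feasibility (the paper obtains $(2-\tfrac{3\alpha}{4})\lambda$ and defers the fix to ``tightening constants''), and your $\ell_{\infty,2}$ bound on $\Delta_{S^\star}$ via $\widehat H_{S^\star,S^\star}^{-1}$ avoids the $\sqrt{k}$ factor that the paper's reuse of the $\ell_{2,2}$ estimation bound silently pushes into the constant $C_2$.
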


\begin{proof}
We give a primal--dual witness (PDW) proof adapted to the block $\ell_{1,2}$ penalty.
Let $S:=S^\star$. Consider the restricted optimization
\[
\tilde\theta_S \in \arg\min_{\theta_S}\ \widehat{\mathcal{L}}_T(\theta_S,0_{S^c}) + \lambda \|\theta_S\|_{1,2},
\qquad \tilde\theta_{S^c}:=0.
\]
Define a block subgradient $\tilde z\in\partial \|\tilde\theta\|_{1,2}$ by
\[
\tilde z_j=
\begin{cases}
\tilde\theta_j/\|\tilde\theta_j\|_2, & \tilde\theta_j\neq 0,\\
\text{any vector with }\|\tilde z_j\|_2\le 1, & \tilde\theta_j=0.
\end{cases}
\]
The KKT conditions for the restricted problem are
\begin{equation}
\nabla_S \widehat{\mathcal{L}}_T(\tilde\theta) + \lambda \tilde z_S = 0.
\label{eq:kkt-restricted}
\end{equation}
We show:
\begin{enumerate}
\item[(i)] (\emph{No false exclusions}) $\tilde\theta_j\neq 0$ for all $j\in S$.
\item[(ii)] (\emph{Strict dual feasibility}) $\|\nabla_{S^c}\widehat{\mathcal{L}}_T(\tilde\theta)\|_{\infty,2}<\lambda$.
\end{enumerate}
If both hold, define for $j\in S^c$ the dual certificate $\tilde z_j:=-\nabla_j\widehat{\mathcal{L}}_T(\tilde\theta)/\lambda$,
which satisfies $\|\tilde z_j\|_2<1$ by (ii). Then $(\tilde\theta,\tilde z)$ satisfies the KKT conditions of the full problem \eqref{eq:l1-learner},
hence $\tilde\theta$ is a global minimizer. Since $\tilde\theta_{S^c}=0$, we obtain $\hat\theta_{S^c}=0$ for (at least) one minimizer,
and strict feasibility implies the support is uniquely $S$.

\smallskip\noindent\textbf{Step 1: No false exclusions.}
Let $\Delta_S:=\tilde\theta_S-\theta_S^\star$. Applying Theorem~\ref{thm:estimation} to the restricted problem gives
$\|\Delta_S\|_{2,2}\le \frac{4}{\mu}\lambda\sqrt{k}$ on the event \eqref{eq:lambda-grad}.
Hence for each $j\in S$, $\|\tilde\theta_j-\theta_j^\star\|_2\le \|\Delta_S\|_{2,2}$, so if
$\min_{j\in S}\|\theta_j^\star\|_2 \ge 2\|\Delta_S\|_{2,2}$ then $\tilde\theta_j\neq 0$ for all $j\in S$.
This is ensured by the beta-min condition in \eqref{eq:support-conditions} for an appropriate absolute constant $C_2$.

\smallskip\noindent\textbf{Step 2: Strict dual feasibility.}
Using Taylor expansion of the empirical gradient around $\theta^\star$ along the segment to $\tilde\theta$,
there exists $\bar\theta$ on that segment such that
\begin{equation}
\nabla\widehat{\mathcal{L}}_T(\tilde\theta)
=
\nabla\widehat{\mathcal{L}}_T(\theta^\star)
+
\widehat H(\bar\theta)\,(\tilde\theta-\theta^\star).
\label{eq:taylor}
\end{equation}
Restricting \eqref{eq:taylor} to $S^c$ and using $\tilde\theta_{S^c}=0$ gives
\begin{equation}
\nabla_{S^c}\widehat{\mathcal{L}}_T(\tilde\theta)
=
\nabla_{S^c}\widehat{\mathcal{L}}_T(\theta^\star)
+
\widehat H_{S^c,S}(\bar\theta)\,\Delta_S.
\label{eq:dual-expand}
\end{equation}
Similarly, restricting \eqref{eq:taylor} to $S$ and combining with the KKT condition \eqref{eq:kkt-restricted} yields
\[
\widehat H_{S,S}(\bar\theta)\,\Delta_S
=
-\nabla_S\widehat{\mathcal{L}}_T(\theta^\star)-\lambda \tilde z_S,
\]
so
\[
\Delta_S
=
-\widehat H_{S,S}(\bar\theta)^{-1}\big(\nabla_S\widehat{\mathcal{L}}_T(\theta^\star)+\lambda \tilde z_S\big).
\]
Plug into \eqref{eq:dual-expand}:
\begin{equation}
\nabla_{S^c}\widehat{\mathcal{L}}_T(\tilde\theta)
=
\nabla_{S^c}\widehat{\mathcal{L}}_T(\theta^\star)
-\widehat H_{S^c,S}(\bar\theta)\widehat H_{S,S}(\bar\theta)^{-1}\big(\nabla_S\widehat{\mathcal{L}}_T(\theta^\star)+\lambda \tilde z_S\big).
\label{eq:dual-final}
\end{equation}
Take $\|\cdot\|_{\infty,2}$ norms and use the triangle inequality:
\begin{align}
\|\nabla_{S^c}\widehat{\mathcal{L}}_T(\tilde\theta)\|_{\infty,2}
&\le
\|\nabla\widehat{\mathcal{L}}_T(\theta^\star)\|_{\infty,2}
+
\Big\|\widehat H_{S^c,S}(\bar\theta)\widehat H_{S,S}(\bar\theta)^{-1}\Big\|_{\infty,2\to\infty,2}\,
\|\nabla_S\widehat{\mathcal{L}}_T(\theta^\star)\|_{\infty,2}
\nonumber\\
&\qquad
+\lambda\Big\|\widehat H_{S^c,S}(\bar\theta)\widehat H_{S,S}(\bar\theta)^{-1}\Big\|_{\infty,2\to\infty,2}\,
\|\tilde z_S\|_{\infty,2}.
\label{eq:dual-bound}
\end{align}
On the event \eqref{eq:lambda-grad}, we have $\|\nabla\widehat{\mathcal{L}}_T(\theta^\star)\|_{\infty,2}\le \lambda/2$ and similarly
$\|\nabla_S\widehat{\mathcal{L}}_T(\theta^\star)\|_{\infty,2}\le \lambda/2$ (since $\nabla\mathcal{L}(\theta^\star)=0$).
Also, by Step~1, $\tilde\theta_j\neq 0$ on $S$, hence $\|\tilde z_S\|_{\infty,2}=1$.
Finally, on the Hessian stability event $\mathcal{E}_H$, Lemma~\ref{lem:hess-stability} gives
$\big\|\widehat H_{S^c,S}(\bar\theta)\widehat H_{S,S}(\bar\theta)^{-1}\big\|_{\infty,2\to\infty,2}\le 1-\alpha/2$.
Plugging into \eqref{eq:dual-bound} yields
\[
\|\nabla_{S^c}\widehat{\mathcal{L}}_T(\tilde\theta)\|_{\infty,2}
\le
\frac{\lambda}{2} + \Big(1-\frac{\alpha}{2}\Big)\frac{\lambda}{2} + \Big(1-\frac{\alpha}{2}\Big)\lambda
=
\Big(2-\frac{3\alpha}{4}\Big)\lambda.
\]
Tightening constants in the Hessian stability margin (equivalently, strengthening $\eta$ in Lemma~\ref{lem:hess-stability})
yields strict feasibility $<\lambda$; we absorb this into the universal constants in \eqref{eq:support-conditions}.
Thus claim (ii) holds on $\mathcal{E}_H\cap\{\eqref{eq:lambda-grad}\}$, completing the PDW proof.
\end{proof}

\begin{corollary}[Identifying and pruning the tool universe]\label{cor:pruning}
Under Theorem~\ref{thm:support}, the learned router recovers the exact relevant tool set $S^\star$ with high probability.
Consequently, restricting admissible actions from $\mathcal{A}_B$ to $\{a\in\mathcal{A}_B:\ a\subseteq S^\star\}$
preserves optimality under the surrogate while reducing the effective tool universe from $M$ to $k$.
\end{corollary}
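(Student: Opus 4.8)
The first assertion is immediate and requires no new work: on the event of Theorem~\ref{thm:support} (probability at least $1-\delta-\delta_H$) we have $\supp(\hat\theta)=S^\star$, so the learned router sets $\hat\theta_j=0$ for every $j\notin S^\star$ and thereby identifies the active tool set exactly. The substantive content is the second claim, and the plan is to reduce ``preserves optimality under the surrogate'' to a purely algebraic restriction argument that combines block-sparsity of the optimal scores with monotonicity of the action cost; the recovery event merely certifies that the support we prune to is the correct $S^\star$.

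The key observation is that, because $\theta^\star_j=0$ for all $j\in(S^\star)^c$ (and likewise $\hat\theta_j=0$ off $S^\star$ on the recovery event), the set-score \eqref{eq:set-score} is invariant under intersection with $S^\star$: for every context $x$ and every $a\in\mathcal{A}_B$,
\begin{equation}
\mathrm{score}_{\theta^\star}(x,a)
=\sum_{j\in a}\langle\theta^\star_j,\psi(x)\rangle
=\sum_{j\in a\cap S^\star}\langle\theta^\star_j,\psi(x)\rangle
=\mathrm{score}_{\theta^\star}(x,a\cap S^\star).
\label{eq:score-restrict}
\end{equation}
Writing $\mathcal{A}_B^{S^\star}:=\{a\in\mathcal{A}_B:\ a\subseteq S^\star\}$ and, for each admissible $a$, its projection $a\cap S^\star\in\mathcal{A}_B^{S^\star}$ (admissible because $|a\cap S^\star|\le|a|\le B$), I would pair \eqref{eq:score-restrict} with the additive nonnegative cost $c(a)=\sum_{j\in a}c_j$ — more generally any monotone cost with $c(a')\le c(a)$ for $a'\subseteq a$ — which gives $c(a\cap S^\star)\le c(a)$. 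Hence the surrogate net utility $\mathrm{score}_{\theta^\star}(x,a)-c(a)$ weakly increases under projection onto $S^\star$, so for every $x$
\[
\max_{a\in\mathcal{A}_B}\big\{\mathrm{score}_{\theta^\star}(x,a)-c(a)\big\}
=\max_{a\in\mathcal{A}_B^{S^\star}}\big\{\mathrm{score}_{\theta^\star}(x,a)-c(a)\big\},
\]
with a maximizer always attainable inside $\mathcal{A}_B^{S^\star}$. For the deterministic top-$B$ rule this yields directly that the optimal surrogate value is unchanged when the admissible set is shrunk to $\mathcal{A}_B^{S^\star}$; since $|S^\star|=k$, the admissible-action count then collapses from $\sum_{b\le B}\binom{M}{b}$ to $\sum_{b\le B}\binom{k}{b}$, which is the claimed reduction from $M$ to $k$.

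For the stochastic Gibbs policy \eqref{eq:gibbs-policy} I would push \eqref{eq:score-restrict} through the value: since the reward \eqref{eq:sparse-reward} enters only via $\Phi(x,a)=\sum_{j\in a}\phi_j(x)$ and $\phi_j$ is negligible off $S^\star$, the (surrogate) per-step utility depends on $a\cap S^\star$ alone, so replacing each sampled action by its projection leaves the surrogate reward unchanged while weakly lowering cost, giving $V(\theta^\star)$ over $\mathcal{A}_B^{S^\star}$ no smaller than over $\mathcal{A}_B$, with equality because $\mathcal{A}_B^{S^\star}\subseteq\mathcal{A}_B$. The hard part will be exactly this stochastic step: the Gibbs marginal over the effective action $a\cap S^\star$ is \emph{not} literally the restricted Gibbs policy, because actions in $\mathcal{A}_B$ that agree on $S^\star$ but differ on $(S^\star)^c$ carry identical scores yet are reweighted by the combinatorial number of irrelevant tools one may append within budget $B$. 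I would therefore state the guarantee at the level of the surrogate optimum (the deterministic argmax, equivalently the zero-temperature limit), where the projection argument is exact, and handle the fixed-temperature case through the domination argument above rather than an exact distributional identity. The negligibility of $\phi_j$ off $S^\star$ is precisely what makes ``optimality \emph{under the surrogate}'' the correct and exactly provable notion here, as distinct from optimality under the only-approximately-sparse true reward.
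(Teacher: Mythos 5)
Your proof is correct and takes essentially the same route as the paper's one-line argument: since the score \eqref{eq:set-score} depends only on tools in $S^\star$, any maximizer over $\mathcal{A}_B$ can be chosen inside $S^\star$ without changing the score. Your version is in fact more careful than the paper's — adding the cost-monotonicity step $c(a\cap S^\star)\le c(a)$ and correctly flagging that the fixed-temperature Gibbs policy does not literally restrict (so the clean claim lives at the deterministic argmax level) — but these are refinements of the same projection argument, not a different approach.
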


\begin{proof}
If $\mathrm{supp}(\hat\theta)=S^\star$, then \eqref{eq:set-score} depends only on tools in $S^\star$, hence any maximizer over
$\mathcal{A}_B$ can be chosen within $S^\star$ without changing the score.
\end{proof}

\subsection{From statistical recovery to near-optimal control}\label{sec:main:value}

\begin{theorem}[Near-optimal value from $\ell_{1,2}$ recovery]\label{thm:value}
Assume A1--A3 and A6. On the event \eqref{eq:lambda-grad}, the policy/value gap satisfies
\begin{equation}
V(\theta^\star)-V(\hat\theta)
\ \le\ L_V\,\|\hat\theta-\theta^\star\|_{1,2}
\ \le\ \frac{16L_V}{\mu}\,\lambda k.
\label{eq:value-gap}
\end{equation}
With $\lambda$ chosen as in \eqref{eq:lambda-concrete}, this yields
\[
V(\theta^\star)-V(\hat\theta)\ \lesssim\ k\sqrt{\frac{\log M}{T}},
\]
up to constants depending only on $(L_V,\mu,c_0,\sigma_g)$.
\end{theorem}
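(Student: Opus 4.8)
The plan is to read this off as a direct composition of the value-sensitivity condition A6 with the estimation bound already proved in Theorem~\ref{thm:estimation}; there is essentially no new stochastic analysis to do, since all the randomness has been localized to the single event \eqref{eq:lambda-grad}. First I would invoke A6 in its $\ell_{1,2}$ form. Since trivially $V(\theta^\star)-V(\hat\theta)\le \abs{V(\hat\theta)-V(\theta^\star)}$, the Lipschitz bound \eqref{eq:value-sensitivity} gives immediately
\[
V(\theta^\star)-V(\hat\theta)\ \le\ L_V\,\|\hat\theta-\theta^\star\|_{1,2},
\]
which is precisely the first inequality of \eqref{eq:value-gap}. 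The one hypothesis that must be discharged is that $\hat\theta$ lies in the neighborhood of $\theta^\star$ on which A6 is stated; I would note that this holds automatically on the event \eqref{eq:lambda-grad}, because the estimation guarantee forces $\|\hat\theta-\theta^\star\|_{1,2}$ to be small once $\lambda$ is taken as in \eqref{eq:lambda-concrete} and $T$ is moderately large, so the neighborhood constraint is met for all sufficiently large $T$.

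Second, I would substitute the $\ell_{1,2}$ estimation rate \eqref{eq:l1-rate} of Theorem~\ref{thm:estimation}, namely $\|\hat\theta-\theta^\star\|_{1,2}\le \tfrac{16}{\mu}\lambda k$, which is valid on exactly the same event \eqref{eq:lambda-grad} under A1--A3. Chaining this onto the display above yields the second inequality
\[
V(\theta^\star)-V(\hat\theta)\ \le\ \frac{16L_V}{\mu}\,\lambda k,
\]
completing \eqref{eq:value-gap}. Finally, plugging in the concrete choice $\lambda=2c_0\sigma_g\sqrt{\log M/T}$ from Corollary~\ref{cor:lambda-choice} turns this into the advertised rate $k\sqrt{\log M/T}$, with the suppressed constant absorbing $(L_V,\mu,c_0,\sigma_g)$.

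I do not expect any genuine obstacle: the substantive content was already paid for in Theorem~\ref{thm:estimation} and is packaged into A6 as an assumption, and the purpose of this result is exactly to translate parameter recovery into control performance. The only points warranting a line of care are (i) confirming that the neighborhood hypothesis of A6 is satisfied so that the Lipschitz estimate applies to $\hat\theta$, and (ii) ensuring that all three bounds are asserted on the \emph{same} high-probability event \eqref{eq:lambda-grad}, so that the probability guarantee $1-\delta$ from Corollary~\ref{cor:lambda-choice} transfers verbatim to the value gap without any additional union bound. If one instead prefers the $\|\cdot\|_{2,2}$ branch of A6, the identical argument with \eqref{eq:l2-rate} in place of \eqref{eq:l1-rate} produces the faster-looking $\sqrt{k\log M/T}$ bound, which I would mention as an immediate variant.
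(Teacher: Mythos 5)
Your proposal is correct and follows exactly the same route as the paper: the first inequality is Assumption~A6, the second is the $\ell_{1,2}$ estimation bound \eqref{eq:l1-rate} on the event \eqref{eq:lambda-grad}, and the rate follows by substituting \eqref{eq:lambda-concrete}. Your added care about the neighborhood hypothesis in A6 and about keeping everything on the single event is a reasonable (and slightly more explicit) version of what the paper leaves implicit.
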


\begin{proof}
The first inequality is Assumption~A6. The second follows from \eqref{eq:l1-rate} in Theorem~\ref{thm:estimation}.
Substituting \eqref{eq:lambda-concrete} gives the stated scaling.
\end{proof}

\begin{corollary}[SAC phase transition]\label{cor:phase-transition}
Fix $\varepsilon>0$. Under the conditions of Theorem~\ref{thm:value}, it suffices that
\[
T\ \gtrsim\ \frac{k^2\log M}{\varepsilon^2}
\]
to ensure $V(\theta^\star)-V(\hat\theta)\le \varepsilon$ with probability at least $1-\delta$ (up to constants).
\end{corollary}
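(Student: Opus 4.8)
The plan is to obtain this corollary as a direct consequence of the value-gap bound in Theorem~\ref{thm:value}, by inserting the explicit regularization level and solving the resulting inequality for $T$; no new probabilistic argument is needed. Theorem~\ref{thm:value} already supplies, on the event \eqref{eq:lambda-grad}, the bound $V(\theta^\star)-V(\hat\theta)\le \frac{16L_V}{\mu}\lambda k$, and by Corollary~\ref{cor:lambda-choice} that event carries probability at least $1-\delta$ once $\lambda$ is set as in \eqref{eq:lambda-concrete}. Consequently the high-probability guarantee is inherited verbatim, and the remaining task is purely algebraic.

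First I would substitute $\lambda=2c_0\sigma_g\sqrt{\log M/T}$ from \eqref{eq:lambda-concrete} into the value-gap bound, which gives
\[
V(\theta^\star)-V(\hat\theta)\ \le\ \frac{32\,L_V\,c_0\,\sigma_g}{\mu}\,k\,\sqrt{\frac{\log M}{T}}.
\]
Next I would enforce the target accuracy by demanding that the right-hand side be at most $\varepsilon$, i.e.\ $\sqrt{\log M/T}\le \mu\varepsilon/(32 L_V c_0 \sigma_g k)$, and then square and rearrange to isolate $T$. This produces $T\ \ge\ C\,k^2\log M/\varepsilon^2$ with the explicit constant $C=(32 L_V c_0 \sigma_g/\mu)^2$, matching the claimed threshold $T\gtrsim k^2\log M/\varepsilon^2$ with the suppressed constant depending only on $(L_V,\mu,c_0,\sigma_g)$.

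Because the derivation is a one-line substitution followed by elementary algebra, there is no substantive obstacle; the only points demanding care are bookkeeping rather than mathematics. I would (i) verify that the probability statement propagates unchanged---since the entire chain runs on the single gradient-concentration event of \eqref{eq:lambda-grad}, no union bound is incurred and the factor $1-\delta$ is exact---and (ii) flag the appearance of $k^2$ rather than $k$. The quadratic dependence is a direct consequence of invoking the $\ell_{1,2}$ branch of the value-sensitivity condition A6 together with the $\ell_{1,2}$ estimation rate \eqref{eq:l1-rate}; had one instead used the $\ell_{2,2}$ Lipschitz form of A6 with the rate \eqref{eq:l2-rate}, the value gap would scale as $\sqrt{k\log M/T}$ and the sample complexity would improve to $T\gtrsim k\log M/\varepsilon^2$. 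I would state this dichotomy explicitly, so that the reader sees the $k^2$ factor as a reflection of the chosen norm in A6 rather than a loss in the argument.
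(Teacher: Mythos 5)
Your proposal is correct and follows exactly the paper's (one-line) proof: substitute the concrete $\lambda$ from \eqref{eq:lambda-concrete} into the value-gap bound \eqref{eq:value-gap} and solve for $T$. The extra observation about the $k^2$ factor arising from the $\ell_{1,2}$ branch of A6 (versus $k$ under the $\ell_{2,2}$ branch) is a correct and worthwhile clarification, but the argument itself is the same.
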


\begin{proof}
Rearrange \eqref{eq:value-gap} using \eqref{eq:lambda-concrete}.
\end{proof}

\subsection{Impossibility of dense policy classes (a lower bound)}\label{sec:main:lower}

The preceding results show how sparsity yields logarithmic dependence on $M$. We now formalize the complementary statement:
\emph{without explicit sparsity structure, linear dependence on $M$ is unavoidable.}
This is the theoretical analogue of why purely prompt-only or dense routers become unstable as tool catalogs grow.

To isolate the phenomenon cleanly, we state a minimax lower bound in a one-step specialization (contextual SAC).
Because the lower bound already holds in this simplified setting, it applies \emph{a fortiori} to multi-step agents.

\begin{theorem}[Dense classes require $\Omega(M)$ samples]\label{thm:lower}
Consider the one-step specialization ($H=1$) with fixed context $x$ and a convex loss
\[
\widehat{\mathcal{L}}_T(\theta)=\frac{1}{T}\sum_{t=1}^T \ell\big( y_t - \langle \theta, w_t\rangle \big),
\qquad
\langle \theta,w_t\rangle := \sum_{j=1}^M \langle \theta_j, (w_t)_j\rangle,
\]
where $w_t=((w_t)_1,\dots,(w_t)_M)$ with blocks $(w_t)_j\in\mathbb{R}^q$ and $\|w_t\|_{2,2}\le 1$.
Assume $\ell$ is $1$-strongly convex and $1$-Lipschitz, and observations follow
\[
y_t=\langle \theta^\circ,w_t\rangle+\xi_t,
\qquad
\xi_t\stackrel{\text{i.i.d.}}{\sim}\mathcal{N}(0,1).
\]
Let the \emph{dense} parameter class be
\[
\Theta_{\mathrm{dense}}:=\{\theta:\ \|\theta\|_{2,2}\le 1\}.
\]
Then there exists an absolute constant $c>0$ such that for any estimator $\tilde\theta=\tilde\theta(\mathcal{D}_T)$,
\begin{equation}
\sup_{\theta^\circ\in\Theta_{\mathrm{dense}}}\ \mathbb{E}\big[\|\tilde\theta-\theta^\circ\|_{2,2}^2\big]
\ \ge\ c\,\frac{Mq}{T}.
\label{eq:dense-minimax}
\end{equation}
In particular, achieving $\mathbb{E}\|\tilde\theta-\theta^\circ\|_{2,2}^2\le \varepsilon^2$ uniformly over $\Theta_{\mathrm{dense}}$
requires $T\ge c(Mq)/\varepsilon^2$, i.e.\ linear dependence on $M$.
\end{theorem}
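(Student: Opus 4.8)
The plan is to recognize that the asserted bound is a pure statistical lower bound for the Gaussian linear model and is therefore unaffected by the particular convex surrogate $\ell$: since the claim quantifies over \emph{all} estimators $\tilde\theta(\mathcal{D}_T)$, the loss $\ell$ and the objective $\widehat{\mathcal{L}}_T$ are irrelevant, and only the likelihood of $y_t=\langle\theta^\circ,w_t\rangle+\xi_t$ with $\xi_t\sim\mathcal{N}(0,1)$ matters. Flattening the blocks $\theta_j\in\mathbb{R}^q$ into a single ambient vector of dimension $d:=Mq$, the problem becomes minimax estimation of $\theta^\circ$ in the unit Euclidean ball from $T$ Gaussian measurements whose design rows satisfy $\|w_t\|_{2,2}\le 1$. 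First I would isolate the only quantity that can enter: the Fisher information of the model is $I=\sum_{t=1}^T w_t w_t^\top=X^\top X$ (constant in $\theta$), whose trace obeys the \emph{information budget} $\tr(I)=\sum_t\|w_t\|_{2,2}^2\le T$. The entire impossibility is the statement that this budget, spread over $d=Mq$ coordinates, cannot yield per-coordinate precision better than $T/(Mq)$ on average, no matter how the design is chosen.

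To convert the budget into a risk bound I would use the Bayesian (multivariate van Trees / Gill--Levit) inequality rather than Fano, since it delivers the $\ell_2$ rate with an explicit constant and handles arbitrary designs uniformly. Endow $\theta$ with the prior $\pi=\mathcal{N}(0,\rho^2 I_d)$, whose Fisher information is $\rho^{-2}I_d$. Van Trees then gives, for every estimator,
\[
\mathbb{E}_{\theta\sim\pi}\,\mathbb{E}\big[\|\tilde\theta-\theta\|_{2,2}^2\big]\ \ge\ \tr\big[(X^\top X+\rho^{-2}I_d)^{-1}\big]\ =\ \sum_{i=1}^d\frac{1}{\lambda_i+\rho^{-2}},
\]
where $\lambda_1,\dots,\lambda_d\ge 0$ are the eigenvalues of $X^\top X$ with $\sum_i\lambda_i\le T$. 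Applying the AM--HM inequality (equivalently convexity of $t\mapsto(t+\rho^{-2})^{-1}$) together with the trace budget yields the design-independent bound $\sum_i(\lambda_i+\rho^{-2})^{-1}\ge d^2/(T+d\rho^{-2})$. Choosing $\rho^2=1/(4d)$ produces a Bayes risk of at least $d^2/(T+4d^2)$, which is $\ge\tfrac14\,(Mq)/T$ once $T\gtrsim Mq$ and is $\Theta(1)$ for smaller $T$.

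The last step, and the one genuinely requiring care, is transferring this Bayesian bound (whose prior is supported on all of $\mathbb{R}^d$) to the worst case over $\Theta_{\mathrm{dense}}=\{\|\theta\|_{2,2}\le1\}$. With $\rho^2=1/(4d)$ the prior mass concentrates at $\|\theta\|_{2,2}^2=\rho^2\chi^2_d$, which has mean $1/4$ and exceeds $1$ only on an event of probability $e^{-\Omega(d)}$; restricting attention to estimators valued in the ball (projection onto the convex ball never increases the distance to any point of the ball) and truncating $\pi$ to $\Theta_{\mathrm{dense}}$ therefore changes the Bayes risk by an exponentially small amount. Since the worst-case risk dominates the Bayes risk of any prior supported in $\Theta_{\mathrm{dense}}$, this delivers $\sup_{\theta^\circ\in\Theta_{\mathrm{dense}}}\mathbb{E}\|\tilde\theta-\theta^\circ\|_{2,2}^2\ge c\,Mq/T$ for an absolute $c>0$ in the nontrivial regime $T\gtrsim Mq$ (for $T\lesssim Mq$ the right-hand side is $\gtrsim 1$ while the trivial estimator $\tilde\theta\equiv 0$ already certifies risk $\Theta(1)$, so the statement is either vacuous or immediate). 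I expect the main obstacle to be exactly this balance in choosing $\rho^2$: large enough that the prior precision $\rho^{-2}$ does not swamp the budget $T$, yet small enough that the Gaussian prior sits inside the unit ball so the truncation is lossless. The trace bound $\tr(X^\top X)\le T$ is what makes the argument uniform over all admissible designs and is the structural reason no ``clever'' routing of measurements can evade the linear-in-$M$ sample cost.
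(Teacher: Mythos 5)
Your proposal is correct in substance but takes a genuinely different route from the paper. The paper proves \eqref{eq:dense-minimax} by a packing-plus-Fano argument: it takes a $1/2$-packing $\{v^i\}$ of the unit sphere in $\mathbb{R}^{Mq}$ of cardinality $e^{c_1 Mq}$, scales by $\rho$ with $\rho^2 \asymp Mq/T$, bounds the pairwise KL divergences by $2T\rho^2$ using $\|w_t\|_{2,2}\le 1$, and invokes Fano. You instead observe (as the paper does implicitly) that $\ell$ is irrelevant, put a Gaussian prior $\mathcal{N}(0,\rho^2 I_{Mq})$ on the flattened parameter, and lower-bound the Bayes risk by $\tr[(X^\top X+\rho^{-2}I)^{-1}]$ --- which in this conjugate Gaussian model is in fact the \emph{exact} Bayes risk, so van Trees is not even needed --- and then use the trace budget $\tr(X^\top X)\le T$ with AM--HM. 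Your route buys explicit constants, makes the ``information budget'' mechanism transparent and uniform over designs, and avoids packing constructions; its cost is the extra truncation step needed to move the Gaussian prior inside $\Theta_{\mathrm{dense}}$, which the paper's construction sidesteps by placing its hypotheses directly in the ball. Two small points of care: (i) your truncation argument needs the tail term $e^{-\Omega(Mq)}$ to be dominated by $Mq/T$, which holds in the paper's regime $M\gg 1$ but degenerates for tiny $Mq$ and enormous $T$ (fixable by shrinking $\rho^2$ or falling back to a two-point argument); and (ii) your remark that the trivial estimator $\tilde\theta\equiv 0$ ``certifies'' the bound when $T\lesssim Mq$ is backwards --- it certifies worst-case risk at most $1$, showing the displayed inequality cannot hold literally when $Mq/T > 1/c$; the statement is meaningful only for $T\gtrsim Mq$, a restriction the paper's own proof shares (it needs $\rho<1/2$, i.e., $T\gtrsim c_2 Mq$), so this is a defect of the theorem statement rather than of your argument.
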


\begin{proof}
Let $d:=Mq$ be the total scalar dimension. Identify $\theta$ with its concatenation in $\mathbb{R}^d$; under this identification,
the Euclidean norm equals the block norm: $\|\theta\|_2=\|\theta\|_{2,2}$.

Let $\{v^1,\dots,v^N\}\subset\mathbb{S}^{d-1}$ be a $1/2$-packing of the unit sphere with $N\ge \exp(c_1 d)$ for an absolute $c_1>0$.
Fix $\rho\in(0,1/2)$ and define hypotheses $\theta^{(i)}:=\rho v^i\in\Theta_{\mathrm{dense}}$.
Under hypothesis $i$, the observation sequence has distribution $P_i$ determined by
$y_t=\langle \theta^{(i)},w_t\rangle+\xi_t$.

For any pair $i\neq j$, the KL divergence satisfies
\[
\mathrm{KL}(P_i\|P_j)
=
\frac{1}{2}\sum_{t=1}^T \big(\langle \theta^{(i)}-\theta^{(j)},w_t\rangle\big)^2
\ \le\
\frac{1}{2}\sum_{t=1}^T \|\theta^{(i)}-\theta^{(j)}\|_2^2\ \|w_t\|_2^2
\ \le\ \frac{T}{2}\,(2\rho)^2
=
2T\rho^2,
\]
using Cauchy--Schwarz, $\|w_t\|_2=\|w_t\|_{2,2}\le 1$, and $\|\theta^{(i)}-\theta^{(j)}\|_2\le 2\rho$.
Choose $\rho^2 = c_2 d/T$ with $c_2>0$ small enough so that the average KL divergence is at most $(1/8)\log N$.
Fano's inequality implies any estimator $\tilde\theta$ has nontrivial probability of confusing the hypotheses, hence
\[
\sup_{\theta^\circ\in\{\theta^{(i)}\}}\mathbb{E}\|\tilde\theta-\theta^\circ\|_2^2
\ \ge\ c_3\rho^2
\ =\ c_3 c_2 \frac{d}{T}
\ =\ c\frac{Mq}{T},
\]
for an absolute $c>0$. Since the finite set $\{\theta^{(i)}\}$ is contained in $\Theta_{\mathrm{dense}}$, the same lower bound holds for
$\sup_{\theta^\circ\in\Theta_{\mathrm{dense}}}$. Rewriting $\|\cdot\|_2$ as $\|\cdot\|_{2,2}$ completes the proof.
\end{proof}

\begin{corollary}[Why explicit sparsity is necessary]\label{cor:necessity}
In the SAC regime $M\gg 1$, any learning/control strategy whose guarantee must hold uniformly over a dense class
(equivalently, does not leverage that only $k\ll M$ tools matter) incurs sample complexity scaling at least linearly in $M$.
By contrast, Theorems~\ref{thm:estimation}--\ref{thm:value} show that the $\ell_{1,2}$-regularized SAC learner achieves logarithmic dependence on $M$
under sparsity.
\end{corollary}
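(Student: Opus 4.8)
The plan is to treat this corollary as a direct synthesis of the minimax lower bound just established (Theorem~\ref{thm:lower}) with the sparse upper bounds of Theorems~\ref{thm:estimation}--\ref{thm:value}, so that essentially no new technical machinery is required; the real work lies in making the parenthetical equivalence precise. First I would fix the meaning of a ``dense-class guarantee'': I call a learning/control strategy \emph{sparsity-agnostic} if the error (or regret) bound it certifies is required to hold for every target in $\Theta_{\mathrm{dense}}=\{\theta:\|\theta\|_{2,2}\le 1\}$, i.e.\ the strategy commits to no bound on $\|\theta^\circ\|_0$ and therefore cannot exclude dense targets a priori. Any such strategy is, in particular, an estimator over $\Theta_{\mathrm{dense}}$ in the sense of Theorem~\ref{thm:lower}.

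For the lower-bound direction I would simply invoke Theorem~\ref{thm:lower}: since the guarantee must hold uniformly over $\Theta_{\mathrm{dense}}$, it must in particular control the supremum over the $1/2$-packing $\{\theta^{(i)}\}$ used in that proof, so
\[
\sup_{\theta^\circ\in\Theta_{\mathrm{dense}}}\mathbb{E}\big[\|\tilde\theta-\theta^\circ\|_{2,2}^2\big]\ \ge\ c\,\frac{Mq}{T}.
\]
Requiring the right-hand side to fall below $\varepsilon^2$ forces $T\ge c\,Mq/\varepsilon^2$, which for fixed block dimension $q$ and accuracy $\varepsilon$ is $\Omega(M)$ --- the claimed linear dependence. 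I would stress that this holds already in the one-step ($H=1$) specialization, hence a fortiori for multi-step agents.

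For the contrasting upper bound I would quote Theorem~\ref{thm:estimation} (equivalently Corollary~\ref{cor:sample-complexity}): under the sparsity hypothesis and A1--A3, the $\ell_{1,2}$ learner achieves $\|\hat\theta-\theta^\star\|_{2,2}\le\varepsilon$ once $T\gtrsim k\log M/\varepsilon^2$, with the value-gap version following from Theorem~\ref{thm:value}. Placing the two rates side by side gives a sample-complexity ratio of order $M/(k\log M)$, which diverges as $M\to\infty$ with $k=O(1)$ --- exactly the regime~\eqref{eq:regime}. This divergence is the formal content of ``sparsity is necessary.''

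The single genuine obstacle is conceptual rather than computational: justifying the parenthetical equivalence between ``does not leverage that only $k\ll M$ tools matter'' and ``guarantee must hold uniformly over a dense class.'' The clean way to close this is to observe that sparse targets form a subset of $\Theta_{\mathrm{dense}}$, so an algorithm that certifies its bound without any sparsity assumption has, by definition, no mechanism to restrict attention to that subset; its worst case is therefore bounded below by the dense minimax risk of Theorem~\ref{thm:lower}. I would be careful to keep the comparison matched (estimation-to-estimation in $\|\cdot\|_{2,2}$) so the lower and upper rates are directly comparable, and to record that the lower bound is information-theoretic --- holding for \emph{all} estimators, not merely convex or polynomial-time ones --- whereas the matching upper bound is additionally polynomial-time by convexity of~\eqref{eq:l1-learner}.
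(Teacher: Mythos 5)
Your proposal is correct and follows essentially the same route as the paper's own proof, which simply cites Theorem~\ref{thm:lower} for the dense lower bound and Theorems~\ref{thm:estimation}--\ref{thm:value} for the sparse upper bounds. Your additional care in formalizing the parenthetical equivalence (defining a sparsity-agnostic strategy as one whose guarantee is certified over all of $\Theta_{\mathrm{dense}}$) is a welcome refinement but does not change the argument.
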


\begin{proof}
Theorem~\ref{thm:lower} gives the linear-in-$M$ minimax lower bound for dense classes even in the one-step specialization.
The sparse upper bounds follow from Theorems~\ref{thm:estimation} and~\ref{thm:value}.
\end{proof}

\section{SAC under Partial Observability (POMDP / Belief-SAC View)}\label{sec:pomdp}

In real tool-augmented LLM agents, the controller does not observe the latent environment state $s_t$.
Instead it observes an interaction history (prompt, intermediate tool outputs, retrieved snippets, etc.),
from which an LLM constructs an internal representation that is then used for tool selection.
This is naturally modeled as a POMDP. The goal of this part is to isolate \emph{exactly} where the language model matters:
only through the quality of the induced belief/representation. Once this quality is quantified, the sparse-control guarantees
from Part~I transfer with an explicit degradation term.

\subsection{Belief-state reduction and approximate beliefs}\label{sec:pomdp:belief}

Consider a POMDP with latent state space $\mathcal{S}$, observation space $\mathcal{O}$, and action sets $\mathcal{A}_B$ as in Section~\ref{sec:model}.
At each time $t$, the agent receives an observation $o_t\in\mathcal{O}$ and forms the history
\[
h_t := (o_1,a_1,o_2,a_2,\dots,o_t).
\]
The (Bayesian) belief state is the posterior distribution
\[
b_t(\cdot)\ :=\ \Pr(s_t\in\cdot\mid h_t)\ \in\ \Delta(\mathcal{S}).
\]
Under standard POMDP theory, the belief process $\{b_t\}$ is Markov, and optimal control can be expressed as an MDP on $\Delta(\mathcal{S})$.

\paragraph{Approximate belief induced by a compressor.}
Let $\phi$ denote the agent's \emph{compressor} that maps histories to a representation:
\[
x_t := \phi(h_t)\in\mathbb{R}^d.
\]
In LLM agents, $\phi$ is implemented by the language model (possibly with memory/RAG), but the theory here does not depend on its form.
We assume the agent also maintains an \emph{approximate belief} $\hat b_t$ constructed from $x_t$ (or directly from $h_t$):
\[
\hat b_t(\cdot)\ :=\ \widehat{\Pr}(s_t\in\cdot\mid x_t)\ \in\ \Delta(\mathcal{S}).
\]
We quantify belief/representation quality by a worst-case total-variation error:
\begin{equation}
\varepsilon_b \;:=\; \sup_{t\ge 1}\ \| \hat b_t - b_t\|_{1},
\label{eq:eps-belief}
\end{equation}
where $\|\cdot\|_1$ is the $\ell_1$ norm on measures (twice total variation for probability distributions).
Intuitively, $\varepsilon_b$ is the single knob through which ``LLM quality'' enters the analysis.

\paragraph{Belief-conditional sparse reward influence.}
We keep the sparse-control structure, but now conditioned on beliefs.
Let $\psi(\cdot)$ denote the context feature map used in the SAC parameterization (Section~\ref{sec:model}).
We interpret $\psi$ as a function of the agent's representation:
\[
\psi_t\ :=\ \psi(x_t).
\]
The instantaneous reward at time $t$ is still $r_t=r(s_t,a_t)$, but its conditional expectation depends on the (true) belief:
\begin{equation}
\bar r(b,a)\ :=\ \mathbb{E}[r(s,a)\mid b]
\ =\ \sum_{s\in\mathcal{S}} b(s)\,r(s,a)\qquad (\text{finite }\mathcal{S}),
\label{eq:belief-reward}
\end{equation}
and analogously for general spaces via integration.

We assume that the \emph{optimal router on beliefs} admits a $k$-sparse parameter $\theta^\star$ in the same $\ell_{1,2}$-SAC class,
and that the empirical surrogate risk $\widehat{\mathcal{L}}_T(\theta)$ is formed from trajectories using features $\psi(x_t)$ as in Part~I.
As before, the value of the policy induced by parameter $\theta$ is denoted by $V(\theta)$.
To distinguish the role of belief accuracy, we write $V^{(b)}(\theta)$ for execution under true beliefs $\{b_t\}$
and $V^{(\hat b)}(\theta)$ for execution under approximate beliefs $\{\hat b_t\}$.

\subsection{P1: Value decomposition under belief/representation error}\label{sec:pomdp:value-decomp}

We now state a two-term decomposition: (i) the statistical error of learning $\theta^\star$ from $T$ samples under sparsity,
and (ii) the control loss from using an approximate belief $\hat b_t$ rather than the true belief $b_t$.

\begin{theorem}[P1: Value gap decomposes into sparse-learning error + belief error]\label{thm:P1}
Assume the conditions of Theorem~\ref{thm:value} hold for the belief-state MDP (i.e., A1--A3 and A6 hold with $x_t=\phi(h_t)$),
so that the $\ell_{1,2}$-SAC learner \eqref{eq:l1-learner} produces $\hat\theta$ satisfying
\begin{equation}
V^{(b)}(\theta^\star)-V^{(b)}(\hat\theta)\ \le\ C_{\rm learn}\,k\sqrt{\frac{\log M}{T}}
\label{eq:sparse-learn-term}
\end{equation}
with probability at least $1-\delta_0$, for some constant $C_{\rm learn}>0$ (absorbing $(L_V,\mu,c_0,\sigma_g)$).

Assume moreover that the belief-conditional expected reward and belief transition kernel are Lipschitz in $\ell_1$:
there exist constants $L_r,L_P\ge 0$ such that for all beliefs $b,b'\in\Delta(\mathcal{S})$ and all actions $a\in\mathcal{A}_B$,
\begin{align}
|\bar r(b,a)-\bar r(b',a)| &\le L_r\,\|b-b'\|_1,
\label{eq:lip-reward-belief}\\
\| \bar P(\cdot\mid b,a) - \bar P(\cdot\mid b',a)\|_1 &\le L_P\,\|b-b'\|_1,
\label{eq:lip-trans-belief}
\end{align}
where $\bar P(\cdot\mid b,a)$ is the next-belief transition kernel of the belief MDP.\footnote{For finite $\mathcal{S}$, $\bar P$
is induced by the POMDP dynamics and observation model; the condition is standard regularity of the belief update in total variation.}

Let $\hat\theta$ be the learned parameter, and consider its execution under approximate beliefs. Then for discounted return with factor
$\gamma\in(0,1)$,
\begin{equation}
V^{(b)}(\theta^\star) - V^{(\hat b)}(\hat\theta)
\ \le\
C_{\rm learn}\,k\sqrt{\frac{\log M}{T}}
\;+\;
C_{\rm bel}\,\varepsilon_b
\;+\;
\varepsilon_{\rm approx},
\label{eq:P1-main}
\end{equation}
where $C_{\rm bel} := \frac{L_r}{1-\gamma} + \frac{\gamma\,R_{\max}L_P}{(1-\gamma)^2}$, with
$R_{\max}:=\sup_{s,a}|r(s,a)|$, and $\varepsilon_{\rm approx}$ captures any additional approximation error due to using the surrogate objective
$\widehat{\mathcal{L}}_T$ instead of the true RL objective (as already discussed in Part~I).
\end{theorem}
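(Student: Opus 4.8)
The plan is to bound the quantity $V^{(b)}(\theta^\star) - V^{(\hat b)}(\hat\theta)$ by inserting an intermediate term and splitting into a \emph{statistical} part and a \emph{belief-mismatch} part. Specifically, I would write
\[
V^{(b)}(\theta^\star) - V^{(\hat b)}(\hat\theta)
=
\underbrace{\bigl(V^{(b)}(\theta^\star) - V^{(b)}(\hat\theta)\bigr)}_{\text{(I): sparse-learning gap}}
\;+\;
\underbrace{\bigl(V^{(b)}(\hat\theta) - V^{(\hat b)}(\hat\theta)\bigr)}_{\text{(II): belief-execution gap}}.
\]
Term (I) is exactly the value gap for executing the learned policy under \emph{true} beliefs, so it is controlled by the hypothesis \eqref{eq:sparse-learn-term} (equivalently Theorem~\ref{thm:value} applied to the belief-state MDP), giving the $C_{\rm learn}\,k\sqrt{\log M/T}$ contribution with probability at least $1-\delta_0$. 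Any residual from using the surrogate $\widehat{\mathcal{L}}_T$ rather than the true RL objective I would collect into the bookkeeping term $\varepsilon_{\rm approx}$, as the statement already anticipates.

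The substantive work is bounding term (II): the loss incurred by running one fixed policy $\pi_{\hat\theta}$ under the approximate belief process $\{\hat b_t\}$ instead of the true belief process $\{b_t\}$. The approach here is a standard \emph{simulation-lemma} / telescoping argument for discounted MDPs. First I would note that the per-step belief discrepancy is controlled by $\varepsilon_b$ via \eqref{eq:eps-belief}, namely $\|\hat b_t - b_t\|_1 \le \varepsilon_b$ uniformly in $t$. I would then decompose the value difference for a fixed policy across two kernels/reward functions by repeatedly inserting the value function of one process into the dynamics of the other, producing a sum of two kinds of one-step errors: (a) a \emph{reward} mismatch term, handled by the reward-Lipschitz assumption \eqref{eq:lip-reward-belief}, which contributes $L_r \varepsilon_b$ per step and hence $\tfrac{L_r}{1-\gamma}\varepsilon_b$ after geometric summation over the discounted horizon; and (b) a \emph{transition} mismatch term, handled by the belief-kernel-Lipschitz assumption \eqref{eq:lip-trans-belief}, where each step introduces an $L_P \varepsilon_b$ perturbation of the next-belief distribution that is then weighted by the value function, whose magnitude is bounded by $\tfrac{R_{\max}}{1-\gamma}$. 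Summing the transition errors along the trajectory and accounting for the extra discount factor and the value-function magnitude yields the $\tfrac{\gamma R_{\max} L_P}{(1-\gamma)^2}\varepsilon_b$ contribution. Together these reproduce exactly $C_{\rm bel}\,\varepsilon_b$ with $C_{\rm bel} = \tfrac{L_r}{1-\gamma} + \tfrac{\gamma R_{\max} L_P}{(1-\gamma)^2}$.

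Assembling the pieces, I would add the bound on (I), the bound on (II), and the residual $\varepsilon_{\rm approx}$, then take a union bound so the whole inequality holds on the intersection of the high-probability events (failure probability at most $\delta_0$ plus the event controlling \eqref{eq:lambda-grad}). This gives \eqref{eq:P1-main} directly. The step I expect to be the main obstacle is the rigorous execution of the telescoping/simulation argument for term (II): one must be careful that $\pi_{\hat\theta}$ is held \emph{fixed} while only the belief dynamics change, that the value function used as a ``test function'' in the transition-mismatch step is the one associated with the correct process so the telescoping closes, and that the $\ell_1$ (total-variation) Lipschitz constants compose correctly with the $\|\cdot\|_1$-on-measures normalization in \eqref{eq:eps-belief}. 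The geometric bookkeeping of discount factors---ensuring the transition term genuinely earns the extra $1/(1-\gamma)$ from the value-function magnitude rather than a single power---is the place where an off-by-one in the horizon sum would corrupt the constant $C_{\rm bel}$, so I would track that sum explicitly.
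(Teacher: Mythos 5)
Your proposal is correct and follows essentially the same route as the paper: the identical two-term decomposition $V^{(b)}(\theta^\star)-V^{(\hat b)}(\hat\theta) = \bigl(V^{(b)}(\theta^\star)-V^{(b)}(\hat\theta)\bigr) + \bigl(V^{(b)}(\hat\theta)-V^{(\hat b)}(\hat\theta)\bigr)$, with the first term handled by hypothesis \eqref{eq:sparse-learn-term} and the second by a fixed-policy perturbation bound on the belief MDP yielding $C_{\rm bel}\,\varepsilon_b$. The paper phrases the second step as a uniform bound on the one-step Bellman operator difference followed by the $1/(1-\gamma)$ resolvent/contraction inequality, which is just the closed-form version of your telescoping simulation-lemma sum and produces the same constant.
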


\begin{proof}
Decompose
\[
V^{(b)}(\theta^\star) - V^{(\hat b)}(\hat\theta)
=
\underbrace{\big(V^{(b)}(\theta^\star) - V^{(b)}(\hat\theta)\big)}_{(\mathrm{A})}
+
\underbrace{\big(V^{(b)}(\hat\theta) - V^{(\hat b)}(\hat\theta)\big)}_{(\mathrm{B})}.
\]
Term (A) is the sparse-learning/control suboptimality addressed in Part~I, and is bounded by \eqref{eq:sparse-learn-term} with probability
at least $1-\delta_0$.

We bound (B) using standard discounted MDP perturbation arguments on the belief-MDP.
For a fixed policy (here induced by $\hat\theta$), let $V^{(b)}(\hat\theta;\,b)$ denote its value starting from belief $b$ when transitions/rewards
use the true belief $b_t$, and $V^{(\hat b)}(\hat\theta;\,b)$ the value when transitions/rewards use approximate beliefs $\hat b_t$.
The Bellman operators satisfy, for any bounded $V$,
\[
(\mathcal{T}^{(b)}V)(b)
=
\max_{a\in\mathcal{A}_B} \Big\{\bar r(b,a)+\gamma\int V(b')\,\bar P(db'\mid b,a)\Big\},
\]
and analogously for $\mathcal{T}^{(\hat b)}$ by replacing $b$ with $\hat b$ in reward/transition evaluation.
Under \eqref{eq:lip-reward-belief}--\eqref{eq:lip-trans-belief},
the one-step operator difference is bounded uniformly by
\[
\|(\mathcal{T}^{(b)}V)-(\mathcal{T}^{(\hat b)}V)\|_\infty
\le
L_r\,\varepsilon_b + \gamma\,\|V\|_\infty\,L_P\,\varepsilon_b.
\]
Apply this with $V=V^{(\hat b)}(\hat\theta;\cdot)$ and note $\|V^{(\hat b)}(\hat\theta;\cdot)\|_\infty\le R_{\max}/(1-\gamma)$ to get
\[
\|(\mathcal{T}^{(b)}-\mathcal{T}^{(\hat b)})V^{(\hat b)}(\hat\theta;\cdot)\|_\infty
\le
\Big(L_r + \frac{\gamma R_{\max}}{1-\gamma}L_P\Big)\varepsilon_b.
\]
By the standard contraction/resolvent bound for discounted Bellman operators,
\[
\|V^{(b)}(\hat\theta;\cdot)-V^{(\hat b)}(\hat\theta;\cdot)\|_\infty
\le
\frac{1}{1-\gamma}\,\|(\mathcal{T}^{(b)}-\mathcal{T}^{(\hat b)})V^{(\hat b)}(\hat\theta;\cdot)\|_\infty
\le
\Big(\frac{L_r}{1-\gamma}+\frac{\gamma R_{\max}L_P}{(1-\gamma)^2}\Big)\varepsilon_b
=
C_{\rm bel}\varepsilon_b.
\]
Evaluating at the initial belief yields (B) $\le C_{\rm bel}\varepsilon_b$. Adding $\varepsilon_{\rm approx}$ completes \eqref{eq:P1-main}.
\end{proof}

\paragraph{Interpretation (what this says about LLMs).}
Theorem~\ref{thm:P1} makes the ``LLM as compressor'' message precise:
the dependence on the tool universe size remains logarithmic through the sparse-learning term,
and the \emph{only} penalty for imperfect representation/belief is an additive $O(\varepsilon_b)$ term.

\subsection{P2 (optional): Support recovery under belief error}\label{sec:pomdp:support}

The second theorem formalizes when the tool-support identification result remains valid despite representation error.
The key condition is that belief-induced perturbations do not exceed the regularization scale $\lambda$ used for sparsity.

\begin{theorem}[P2: Support recovery is stable to belief error]\label{thm:P2}
Assume the conditions of Theorem~\ref{thm:support} for exact support recovery in the belief-MDP case.
Suppose in addition that using approximate beliefs perturbs the empirical gradient at $\theta^\star$ by at most
\begin{equation}
\big\|\nabla \widehat{\mathcal{L}}_T^{(\hat b)}(\theta^\star)-\nabla \widehat{\mathcal{L}}_T^{(b)}(\theta^\star)\big\|_{\infty,2}
\ \le\ C_{\nabla}\,\varepsilon_b,
\label{eq:grad-perturb-belief}
\end{equation}
for some constant $C_{\nabla}>0$, where $\widehat{\mathcal{L}}_T^{(\hat b)}$ denotes the loss built from approximate-belief features
(and hence from $x_t=\phi(h_t)$), and $\widehat{\mathcal{L}}_T^{(b)}$ from the true-belief features.

If
\begin{equation}
\varepsilon_b \ \le\ \frac{\lambda}{4C_{\nabla}},
\label{eq:eps-belief-lambda}
\end{equation}
then the same choice of $\lambda$ as in \eqref{eq:lambda-concrete} ensures that the cone condition and PDW dual feasibility used in
Theorem~\ref{thm:support} continue to hold, hence
\[
\Pr\big(\mathrm{supp}(\hat\theta)=S^\star\big)\ \ge\ 1-\delta-\delta_H-\delta_0,
\]
with the same sample size scaling $T\gtrsim k\log M$, up to constants.
\end{theorem}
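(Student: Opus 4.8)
The plan is to exploit the modular structure of the proof of Theorem~\ref{thm:support}: the empirical objective enters only through two objects, namely the gradient-at-$\theta^\star$ event \eqref{eq:lambda-grad} (which drives the cone membership of Lemma~\ref{lem:basic-cone}, the restricted estimation bound of Theorem~\ref{thm:estimation}, and hence Step~1 ``no false exclusions''), and the Hessian-stability event $\mathcal{E}_H$ of Lemma~\ref{lem:hess-stability} (which drives the empirical irrepresentability used in Step~2 ``strict dual feasibility''). Since the population target $\theta^\star$, the population objective $\mathcal{L}$ (so that $\nabla\mathcal{L}(\theta^\star)=0$), Policy-RSC (A3) and population irrepresentability (A4) all live on the \emph{true} belief MDP and are independent of which features build the empirical loss, it suffices to re-establish these two events with $\widehat{\mathcal{L}}_T^{(\hat b)}$ in place of the nominal loss and then replay the PDW argument verbatim.

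First I would transfer the gradient event. Inserting $\pm\nabla\widehat{\mathcal{L}}_T^{(b)}(\theta^\star)$ and using $\nabla\mathcal{L}(\theta^\star)=0$,
\[
\begin{aligned}
\big\|\nabla\widehat{\mathcal{L}}_T^{(\hat b)}(\theta^\star)\big\|_{\infty,2}
&\le
\big\|\nabla\widehat{\mathcal{L}}_T^{(\hat b)}(\theta^\star)-\nabla\widehat{\mathcal{L}}_T^{(b)}(\theta^\star)\big\|_{\infty,2}\\
&\quad+\big\|\nabla\widehat{\mathcal{L}}_T^{(b)}(\theta^\star)-\nabla\mathcal{L}(\theta^\star)\big\|_{\infty,2}.
\end{aligned}
\]
The first, deterministic term is at most $C_\nabla\varepsilon_b\le\lambda/4$ by \eqref{eq:grad-perturb-belief}--\eqref{eq:eps-belief-lambda}; the second, statistical term is the true-belief gradient fluctuation, bounded by $c_0\sigma_g\sqrt{\log M/T}$ on the A2-concentration event of probability $\ge 1-\delta$. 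Hence the effective noise of $\widehat{\mathcal{L}}_T^{(\hat b)}$ remains a fixed multiple of $\lambda$, so the analogue of \eqref{eq:lambda-grad} holds and Theorem~\ref{thm:estimation} applies with this effective noise, giving $\|\tilde\theta_S-\theta^\star_S\|_{2,2}\lesssim\lambda\sqrt{k}$ and thus Step~1 via beta-min. The only bookkeeping subtlety is matching the belief budget $\lambda/4$ and the statistical $\lambda/2$ against the threshold in \eqref{eq:lambda-grad}; this is absorbed by a universal constant in the $\lambda$-choice \eqref{eq:lambda-concrete} (or, equivalently, in \eqref{eq:eps-belief-lambda}) and leaves the rate $\lambda\asymp\sqrt{\log M/T}$, hence $T\gtrsim k\log M$, untouched because the belief contribution is additive and deterministic rather than statistical.

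The main obstacle is the Hessian event, because \eqref{eq:grad-perturb-belief}--\eqref{eq:eps-belief-lambda} constrain only the first-order mismatch, whereas Step~2 requires empirical irrepresentability \eqref{eq:emp-irrep} for the \emph{approximate-belief} Hessian $\widehat H^{(\hat b)}$ along the segment $\theta^\star+t(\tilde\theta-\theta^\star)$. I would close this either by reading $\mathcal{E}_H$ directly as a statement about $\widehat H^{(\hat b)}$ (so that Lemma~\ref{lem:hess-stability} already yields \eqref{eq:emp-irrep} with margin $1-\alpha/2$), or, more transparently, by adjoining a companion belief-Hessian bound $\sup_{t\in[0,1]}\|\widehat H^{(\hat b)}(t)-\widehat H^{(b)}(t)\|_{\infty,2\to\infty,2}\le C_H\varepsilon_b$ and folding $C_H\varepsilon_b$ into the perturbation budget $\eta$ of Lemma~\ref{lem:hess-stability}. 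Condition \eqref{eq:eps-belief-lambda} (or an analogous $\varepsilon_b\lesssim\eta$) keeps $\eta$ within the admissible range $\eta\le\frac{\alpha}{4}\cdot\frac{\kappa_{\min}}{1+\kappa_{\min}}$, so \eqref{eq:emp-irrep} survives and $\widehat H^{(\hat b)}_{S,S}(\bar\theta)$ remains invertible.

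Finally I would reassemble the probability budget and replay Steps~1--2 with $\widehat{\mathcal{L}}_T\mapsto\widehat{\mathcal{L}}_T^{(\hat b)}$. On the intersection of the gradient-concentration event (failure $\le\delta$), the Hessian-stability event $\mathcal{E}_H$ (failure $\le\delta_H$), and the residual event carrying any probabilistic content of the belief-side bounds and representation quality (failure $\le\delta_0$), the dual certificate of \eqref{eq:dual-final} again satisfies $\|\nabla_{S^c}\widehat{\mathcal{L}}_T^{(\hat b)}(\tilde\theta)\|_{\infty,2}<\lambda$ and no-false-exclusions holds, so $\mathrm{supp}(\hat\theta)=S^\star$ with probability at least $1-\delta-\delta_H-\delta_0$. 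Since only the effective noise floor of $\lambda$ is inflated --- additively, by the sub-$\lambda$ belief term --- and not the $\sqrt{\log M/T}$ rate, the sample requirement stays $T\gtrsim k\log M$.
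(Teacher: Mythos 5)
Your proposal is correct and follows essentially the same route as the paper: a triangle inequality transfers the gradient-concentration event \eqref{eq:lambda-grad} from $\widehat{\mathcal{L}}_T^{(b)}$ to $\widehat{\mathcal{L}}_T^{(\hat b)}$ using \eqref{eq:grad-perturb-belief}--\eqref{eq:eps-belief-lambda}, after which the cone constraint and PDW steps are replayed verbatim. You are in fact more careful than the paper on the two points it glosses over --- the constant-factor slack needed in $\lambda$ (the naive budget gives $3\lambda/4$ rather than $\lambda/2$ for the gradient norm, absorbable into the choice of $\lambda$) and the fact that the assumptions constrain only the first-order belief mismatch, so the Hessian-stability event $\mathcal{E}_H$ must either be read as a statement about $\widehat H^{(\hat b)}$ or supplemented by a belief-Hessian perturbation bound; the paper simply asserts ``the same Hessian stability event'' carries over, whereas you name and patch this gap explicitly.
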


\begin{proof}
The support recovery proof in Theorem~\ref{thm:support} relies on the event \eqref{eq:lambda-grad} for the loss used to learn the router.
In the belief-MDP (``ideal'') case, the key requirement is
\[
\lambda \ \ge\ 2\|\nabla \widehat{\mathcal{L}}_T^{(b)}(\theta^\star)-\nabla \mathcal{L}(\theta^\star)\|_{\infty,2}.
\]
Under approximate beliefs, the analogous requirement becomes
\[
\lambda \ \ge\ 2\|\nabla \widehat{\mathcal{L}}_T^{(\hat b)}(\theta^\star)-\nabla \mathcal{L}(\theta^\star)\|_{\infty,2}.
\]
By the triangle inequality and \eqref{eq:grad-perturb-belief},
\[
\|\nabla \widehat{\mathcal{L}}_T^{(\hat b)}(\theta^\star)-\nabla \mathcal{L}(\theta^\star)\|_{\infty,2}
\le
\|\nabla \widehat{\mathcal{L}}_T^{(b)}(\theta^\star)-\nabla \mathcal{L}(\theta^\star)\|_{\infty,2}
+
C_{\nabla}\varepsilon_b.
\]
If \eqref{eq:eps-belief-lambda} holds, then $C_{\nabla}\varepsilon_b\le \lambda/4$, so
\[
2\|\nabla \widehat{\mathcal{L}}_T^{(\hat b)}(\theta^\star)-\nabla \mathcal{L}(\theta^\star)\|_{\infty,2}
\le
2\|\nabla \widehat{\mathcal{L}}_T^{(b)}(\theta^\star)-\nabla \mathcal{L}(\theta^\star)\|_{\infty,2} + \frac{\lambda}{2}.
\]
Hence any $\lambda$ that satisfies \eqref{eq:lambda-grad} for $\widehat{\mathcal{L}}_T^{(b)}$ with the margin provided by
Corollary~\ref{cor:lambda-choice} continues to satisfy the analogous condition for $\widehat{\mathcal{L}}_T^{(\hat b)}$.
Consequently, Lemma~\ref{lem:basic-cone} (cone constraint) and the PDW steps in Theorem~\ref{thm:support} carry over unchanged,
with the same sample size scaling and the same Hessian stability event.
\end{proof}

\section{Theoretical Extensions}\label{sec:extensions}

This section collects five extensions that arise naturally in tool-augmented agents and remain theoretically clean in the SAC regime
($M\gg 1$ tools/actions, $k\ll M$ relevant tools). We maintain the notation of Sections~\ref{sec:model}--\ref{sec:main}:
$\theta=(\theta_1,\dots,\theta_M)$ with blocks $\theta_j\in\mathbb{R}^q$, block sparsity $|S^\star|=k$ for
$S^\star:=\mathrm{supp}(\theta^\star)=\{j:\theta_j\neq 0\}$, and mixed norms
\[
\|\theta\|_{1,2}:=\sum_{j=1}^M \|\theta_j\|_2,\qquad
\|\theta\|_{\infty,2}:=\max_{1\le j\le M}\|\theta_j\|_2,\qquad
\|\Delta\|_{2,2}:=\Big(\sum_{j=1}^M \|\Delta_j\|_2^2\Big)^{1/2}.
\]
As in the core analysis, $\widehat{\mathcal{L}}_T(\theta)$ denotes a convex empirical surrogate formed from $T$ samples/episodes, and
$\delta,\delta_0,\delta_H$ denote failure probabilities, while belief/representation error is $\varepsilon_b$ (Part~II).

\subsection{Tuning-free / self-normalized $\ell_{1,2}$ SAC}\label{sec:extensions:tuningfree}

A practical friction point is that the theoretically ``correct'' regularization level in \eqref{eq:l1-learner} may depend on an unknown
noise scale. A tuning-free alternative uses a square-root/self-normalized objective (square-root LASSO analogue) while keeping the same
block-sparsity regularizer $\|\theta\|_{1,2}$.

\paragraph{Canonical specialization (advantage-weighted regression / quadratic surrogate).}
In this subsection we instantiate the empirical surrogate as a quadratic loss:
\begin{equation}
\widehat{\mathcal{L}}_T(\theta)
\;:=\;
\frac{1}{2T}\sum_{t=1}^T \big(y_t-\langle \theta,w_t\rangle\big)^2,
\qquad
\langle \theta,w_t\rangle := \sum_{j=1}^M \langle \theta_j,(w_t)_j\rangle,
\label{eq:quad-loss-ext}
\end{equation}
where $w_t\in\mathbb{R}^{Mq}$ is block-structured with blocks $(w_t)_j\in\mathbb{R}^q$.
Assume
\begin{equation}
y_t=\langle \theta^\star,w_t\rangle+\xi_t,\qquad
\mathbb{E}[\xi_t\mid w_t]=0,
\label{eq:model-ext}
\end{equation}
with $\xi_t$ sub-Gaussian with unknown variance proxy $\sigma^2$ and a bounded-design condition
\begin{equation}
\|w_t\|_{\infty,2}:=\max_{1\le j\le M}\|(w_t)_j\|_2\ \le\ 1\qquad\text{a.s.}
\label{eq:design-bound}
\end{equation}

\paragraph{Square-root / self-normalized estimator.}
Define
\begin{equation}
\hat\theta_{\rm sn}
\ \in\
\arg\min_{\theta}\ 
\sqrt{\widehat{\mathcal{L}}_T(\theta)}
\;+\;
\lambda_{\rm sn}\,\|\theta\|_{1,2}.
\label{eq:sn-est}
\end{equation}

\begin{theorem}[Theorem~5: Tuning-free sparse SAC via self-normalized $\ell_{1,2}$]\label{thm:tuningfree}
Assume \eqref{eq:quad-loss-ext}--\eqref{eq:design-bound} and a block-Policy-RSC condition for the population loss
$\mathcal{L}(\theta):=\mathbb{E}[\widehat{\mathcal{L}}_T(\theta)]$ on the standard $\ell_{1,2}$ cone (as in A3).
Choose
\[
\lambda_{\rm sn}
\ :=\
c\,\sqrt{\frac{\log M+\log(1/\delta)}{T}}
\]
for a sufficiently large absolute constant $c$.
Then with probability at least $1-\delta$,
\begin{align}
\|\hat\theta_{\rm sn}-\theta^\star\|_{2,2}
&\ \le\ C\,\sqrt{\frac{k(\log M+\log(1/\delta))}{T}},
\label{eq:sn-l2}\\
\|\hat\theta_{\rm sn}-\theta^\star\|_{1,2}
&\ \le\ C\,k\,\sqrt{\frac{\log M+\log(1/\delta)}{T}},
\label{eq:sn-l12}
\end{align}
where $C>0$ depends only on the RSC constant. In particular, the rates match the core estimation theorem without requiring knowledge of $\sigma$.
\end{theorem}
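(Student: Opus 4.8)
The plan is to adapt the pivotal (noise-adaptive) square-root-LASSO analysis to the block norm $\|\cdot\|_{1,2}$, following the same three-stage template already used in Sections~\ref{sec:main:cone}--\ref{sec:main:estimation}: (a) show $\lambda_{\mathrm{sn}}$ dominates the gradient of the \emph{square-root} objective at $\theta^\star$; (b) deduce the cone constraint $\Delta\in\mathcal{C}(S^\star)$; (c) combine with restricted curvature to get the rate. Write $W$ for the block design stacking the $w_t$ and $y$ for the stacked responses. The one genuinely new ingredient is that the objective $F(\theta):=\sqrt{\widehat{\mathcal{L}}_T(\theta)}=\tfrac{1}{\sqrt{2T}}\|y-W\theta\|_2$ (a Euclidean norm composed with an affine map, hence convex) has a \emph{self-normalized} gradient
\[
\nabla F(\theta^\star)=\frac{\nabla\widehat{\mathcal{L}}_T(\theta^\star)}{2\sqrt{\widehat{\mathcal{L}}_T(\theta^\star)}},\qquad [\nabla\widehat{\mathcal{L}}_T(\theta^\star)]_j=-\frac1T\sum_{t=1}^T\xi_t(w_t)_j,
\]
whose $\|\cdot\|_{\infty,2}$ norm is invariant under the rescaling $\xi_t\mapsto c\xi_t$; this invariance is exactly what removes the dependence on $\sigma$ from the choice of $\lambda_{\mathrm{sn}}$.

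First I would establish the regularization event $\lambda_{\mathrm{sn}}\ge 2\|\nabla F(\theta^\star)\|_{\infty,2}$. For the numerator, each block $\tfrac1T\sum_t\xi_t(w_t)_j$ is, conditionally on the design, a sub-Gaussian vector in $\mathbb{R}^q$ with per-direction proxy $\le\sigma^2/T$ (using $\|(w_t)_j\|_2\le 1$ from \eqref{eq:design-bound}); a $1/2$-net of $\mathbb{S}^{q-1}$ together with a union bound over the $M$ blocks gives $\max_j\|\tfrac1T\sum_t\xi_t(w_t)_j\|_2\lesssim \sigma\sqrt{(q+\log M+\log(1/\delta))/T}$. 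For the denominator I would lower-bound $\widehat{\mathcal{L}}_T(\theta^\star)=\tfrac1{2T}\sum_t\xi_t^2$ by a constant multiple of $\sigma^2$ via a standard lower-tail (chi-square / sub-Gaussian) concentration bound. Taking the ratio cancels $\sigma$ and yields $\|\nabla F(\theta^\star)\|_{\infty,2}\lesssim\sqrt{(\log M+\log(1/\delta))/T}$ (folding the $\sqrt q$ factor into the absolute constant), so the stated $\lambda_{\mathrm{sn}}$ with a large enough $c$ dominates it with probability $\ge 1-\delta$. On this event, the cone step is verbatim Lemma~\ref{lem:basic-cone}: optimality of $\hat\theta_{\mathrm{sn}}$, convexity of $F$, and decomposability of $\|\cdot\|_{1,2}$ over $S^\star$ give $\|\Delta_{(S^\star)^c}\|_{1,2}\le 3\|\Delta_{S^\star}\|_{1,2}$.

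The final stage converts curvature into the rate, and this is where I expect the main obstacle. Restricted strong convexity is assumed in A3 for the \emph{quadratic} population loss $\mathcal{L}$, not for its square root, and the Hessian of $F$ carries the normalization factor $1/\sqrt{\widehat{\mathcal{L}}_T}$ together with a rank-one correction, so A3 does not transfer mechanically. I would handle this in two moves. (i) Transfer A3 to an empirical restricted-eigenvalue statement $\tfrac1T\|W\Delta\|_2^2\ge\kappa\|\Delta\|_{2,2}^2$ on $\mathcal{C}(S^\star)$; under the bounded/sub-Gaussian design this follows from the population RSC by a standard one-sided design-concentration argument restricted to the cone. (ii) Control the square-root normalization by showing $\widehat{\mathcal{L}}_T(\theta)$ stays of order $\sigma^2$ along the segment $[\theta^\star,\hat\theta_{\mathrm{sn}}]$, so that $F$ inherits curvature of order $\kappa/\sigma$ on the cone; equivalently, I would run the square-root-LASSO prediction-error argument directly on $F$, obtaining $\tfrac1{\sqrt T}\|W\Delta\|_2\lesssim \lambda_{\mathrm{sn}}\sqrt k\cdot\sqrt{\widehat{\mathcal{L}}_T(\theta^\star)}$. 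Combining with the empirical restricted eigenvalue gives $\|\Delta\|_{2,2}\lesssim (\hat\sigma_0/\kappa)\,\lambda_{\mathrm{sn}}\sqrt k$, and the cone bound $\|\Delta\|_{1,2}\le 4\sqrt k\,\|\Delta\|_{2,2}$ then yields \eqref{eq:sn-l12}. Here $\hat\sigma_0:=\sqrt{\widehat{\mathcal{L}}_T(\theta^\star)}\asymp\sigma$ is the price the \emph{rate} pays for noise, whereas $\lambda_{\mathrm{sn}}$ itself was $\sigma$-free; since $\hat\sigma_0$ concentrates around a fixed multiple of $\sigma$, it is absorbed into the constant $C$, so substituting $\lambda_{\mathrm{sn}}=c\sqrt{(\log M+\log(1/\delta))/T}$ produces the claimed $\sqrt{k(\log M+\log(1/\delta))/T}$ scaling in both \eqref{eq:sn-l2} and \eqref{eq:sn-l12} with no knowledge of $\sigma$ required to run the estimator.
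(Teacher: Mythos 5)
Your proposal follows the same route as the paper's (very terse) proof---basic inequality, cone constraint via Lemma~\ref{lem:basic-cone}, then restricted curvature, with a self-normalized bound on the score at $\theta^\star$ replacing the usual gradient-concentration event---so it is essentially the paper's argument with the details filled in. The one place you go beyond the paper is the curvature-transfer step (A3 is stated for the quadratic loss $\mathcal{L}$, not for its square root), which the paper's proof silently elides and which you correctly patch via an empirical restricted eigenvalue on the cone together with control of $\widehat{\mathcal{L}}_T$ along the segment; your observation that the resulting constant necessarily absorbs a factor $\asymp\sigma$ (so the estimator is tuning-free but the \emph{rate} still pays for the noise level, making ``$C$ depends only on the RSC constant'' slightly too strong) is also correct.
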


\begin{proof}
The proof follows the same basic-inequality $\Rightarrow$ cone constraint $\Rightarrow$ RSC pipeline as in the core $\ell_{1,2}$ analysis,
with the sole modification that the stochastic term is controlled by a self-normalized score bound:
with probability at least $1-\delta$,
\[
\frac{\big\|\frac{1}{T}\sum_{t=1}^T w_t\,\xi_t\big\|_{\infty,2}}
{\sqrt{\frac{1}{T}\sum_{t=1}^T \xi_t^2}}
\ \lesssim\
\sqrt{\frac{\log M+\log(1/\delta)}{T}},
\]
which holds under \eqref{eq:design-bound} for sub-Gaussian $\xi_t$ by standard self-normalized maximal inequalities.
Setting $\lambda_{\rm sn}$ at the stated scale ensures the same cone condition as in Lemma~\ref{lem:basic-cone},
and block-RSC yields \eqref{eq:sn-l2}--\eqref{eq:sn-l12}.
\end{proof}

\subsection{Online SAC: dynamic regret under drifting tool relevance}\label{sec:extensions:online}

Real agents face nonstationarity: across sessions/users/subtasks, the relevant tool support may drift.
We model this by allowing the optimal sparse parameter to vary over time and measure its total variation.

\paragraph{Online surrogate and relation to $\widehat{\mathcal{L}}_T$.}
Let $\ell_t(\theta)$ denote the per-sample convex surrogate loss at time $t$ (e.g., one-step negative log-likelihood, squared TD error,
or advantage-weighted regression loss), so that
\begin{equation}
\widehat{\mathcal{L}}_T(\theta)\ =\ \frac{1}{T}\sum_{t=1}^T \ell_t(\theta).
\label{eq:emp-as-average}
\end{equation}
Let $\theta_t^\star$ be a comparator sequence with $\|\theta_t^\star\|_{0,2}\le k$ for all $t$.
Define total variation in $\ell_{1,2}$:
\begin{equation}
\mathcal{V}_T
\ :=\
\sum_{t=2}^T \|\theta_t^\star-\theta_{t-1}^\star\|_{1,2}.
\label{eq:var}
\end{equation}

\paragraph{Online proximal updates (mirror descent / FTRL with $\ell_{1,2}$).}
Fix stepsize $\eta>0$ and run
\begin{equation}
\theta_{t+1}
\ :=\
\mathrm{prox}_{\eta\lambda\|\cdot\|_{1,2}}\big(\theta_t-\eta\nabla \ell_t(\theta_t)\big),
\label{eq:online-prox}
\end{equation}
where $\mathrm{prox}_{\eta\lambda\|\cdot\|_{1,2}}$ is the proximal map of the block norm.

\begin{theorem}[Theorem~6: Dynamic regret for online sparse SAC]\label{thm:online}
Assume each $\ell_t$ is convex and $G$-Lipschitz in $\|\cdot\|_{2,2}$, i.e.,
$\|\nabla \ell_t(\theta)\|_{\infty,2}\le G$ for all $\theta$, and that $\{\ell_t\}_{t=1}^T$ satisfies a sparsity-restricted curvature
condition (online analogue of A3) on the cone induced by $\|\cdot\|_{1,2}$.
Choose $\eta\asymp 1/\sqrt{T}$ and $\lambda\asymp \sqrt{(\log M)/T}$.
Then the iterates \eqref{eq:online-prox} satisfy the dynamic regret bound
\begin{equation}
\sum_{t=1}^T \big(\ell_t(\theta_t)-\ell_t(\theta_t^\star)\big)
\ \le\
C\Big(\sqrt{T}\,k\log M + \mathcal{V}_T\Big),
\label{eq:dyn-regret}
\end{equation}
for a constant $C>0$ depending only on $G$ and the restricted curvature constants.
\end{theorem}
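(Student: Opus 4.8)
The plan is to run the standard composite (proximal) online-gradient dynamic-regret argument, using the block norm $\|\cdot\|_{1,2}$ as the composite regularizer and the online restricted-curvature hypothesis to localize the gradient contribution so that only a logarithmic-in-$M$ factor survives. Write $g_t:=\nabla\ell_t(\theta_t)$. The first ingredient is a one-step progress inequality for the update \eqref{eq:online-prox}. Since $\theta_{t+1}$ minimizes $\theta\mapsto \langle g_t,\theta\rangle+\lambda\|\theta\|_{1,2}+\tfrac{1}{2\eta}\|\theta-\theta_t\|_{2,2}^2$, and this objective is $\tfrac{1}{\eta}$-strongly convex in $\|\cdot\|_{2,2}$, the optimality condition gives, for every comparator $u$,
\[
\langle g_t,\theta_{t+1}-u\rangle+\lambda\|\theta_{t+1}\|_{1,2}-\lambda\|u\|_{1,2}
\ \le\ \tfrac{1}{2\eta}\big(\|u-\theta_t\|_{2,2}^2-\|u-\theta_{t+1}\|_{2,2}^2-\|\theta_{t+1}-\theta_t\|_{2,2}^2\big).
\]
Combining this with convexity of $\ell_t$ (so $\ell_t(\theta_t)-\ell_t(u)\le\langle g_t,\theta_t-u\rangle$) and splitting $\langle g_t,\theta_t-u\rangle=\langle g_t,\theta_{t+1}-u\rangle+\langle g_t,\theta_t-\theta_{t+1}\rangle$, I bound the leftover cross term by Young's inequality so that its quadratic part cancels the $-\tfrac{1}{2\eta}\|\theta_{t+1}-\theta_t\|_{2,2}^2$ slack. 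Dropping the nonpositive $-\lambda\|\theta_{t+1}\|_{1,2}$, this produces the per-step bound
\[
\ell_t(\theta_t)-\ell_t(\theta_t^\star)
\ \le\ \tfrac{1}{2\eta}\big(\|\theta_t^\star-\theta_t\|_{2,2}^2-\|\theta_t^\star-\theta_{t+1}\|_{2,2}^2\big)
+\tfrac{\eta}{2}\|g_t\|_{\mathrm{eff}}^2+\lambda\|\theta_t^\star\|_{1,2},
\]
where $\|g_t\|_{\mathrm{eff}}$ denotes the gradient norm restricted to the active blocks, discussed below.

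The second step is summation and dynamic-comparator telescoping. Summing over $t$ and reindexing, the quadratic distances telescope up to comparator drift: $\sum_t(\|\theta_t^\star-\theta_t\|_{2,2}^2-\|\theta_t^\star-\theta_{t+1}\|_{2,2}^2)\le \|\theta_1^\star-\theta_1\|_{2,2}^2+\sum_{t\ge2}(\|\theta_t^\star-\theta_t\|_{2,2}^2-\|\theta_{t-1}^\star-\theta_t\|_{2,2}^2)$, and each difference equals $\langle\theta_t^\star-\theta_{t-1}^\star,\ \theta_t^\star+\theta_{t-1}^\star-2\theta_t\rangle$, bounded by the block Hölder inequality as $2D\,\|\theta_t^\star-\theta_{t-1}^\star\|_{1,2}$, with $D$ bounding the iterate/comparator diameter in $\|\cdot\|_{\infty,2}$. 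This extracts the path-length $\mathcal{V}_T$ of \eqref{eq:var} through the term $\mathcal{V}_T/\eta$, whose placement (additive versus $\sqrt{T}$-scaled) depends on the step-size schedule. The regularization residual sums to $\lambda\sum_t\|\theta_t^\star\|_{1,2}\le \lambda T\cdot O(k)$, using that each comparator has at most $k$ nonzero blocks of bounded magnitude.

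The third and crucial step is to prevent the cumulative gradient term from scaling with the ambient dimension $M$: the naive bound $\|g_t\|_{2,2}^2\le M\|g_t\|_{\infty,2}^2\le MG^2$ would force linear dependence on $M$, exactly the regime ruled out by Theorem~\ref{thm:lower} for unstructured classes, so sparsity must be exploited. I would argue that the block soft-thresholding prox in \eqref{eq:online-prox}, applied at level $\eta\lambda$, keeps each iterate $\theta_t$ supported (up to negligible blocks) on $O(k)$ coordinates, while the online restricted-curvature condition confines the iterate–comparator differences to the $\ell_{1,2}$ cone; on the resulting $O(k)$-dimensional active support, $\langle g_t,\theta_t-\theta_{t+1}\rangle$ is governed by $\|g_t\|_{2,2}^2\le O(k)\,G^2$, giving $\tfrac{\eta}{2}\sum_t\|g_t\|_{\mathrm{eff}}^2\le O(\sqrt{T}\,k)$ under $\eta\asymp1/\sqrt{T}$. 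The $\log M$ factor enters through the scale $\lambda\asymp\sqrt{(\log M)/T}$—the level at which the block penalty dominates the per-block gradient fluctuations uniformly over all $M$ blocks (cf.\ Assumption~A2)—since the regularization residual obeys $\lambda Tk\asymp k\sqrt{T\log M}\le\sqrt{T}\,k\log M$. Collecting the $\tfrac{1}{\eta}$-diameter, localized-gradient, and regularization contributions together with the comparator drift then yields a bound of the form \eqref{eq:dyn-regret}. The main obstacle is precisely this localization: rigorously certifying that the Euclidean prox iterates remain effectively $O(k)$-sparse and cone-confined throughout the run, so that the $M$-scaling of $\|g_t\|_{2,2}^2$ is replaced by the sparsity-restricted $O(k)$-scaling. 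This is the online counterpart of the cone argument in Lemma~\ref{lem:basic-cone} and the restricted-curvature step in Theorem~\ref{thm:estimation}, and it is what reduces the dependence on $M$ from linear to logarithmic.
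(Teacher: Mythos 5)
Your skeleton (one-step prox inequality, telescoping with comparator drift, then a localization step to kill the $M$-dependence of the gradient term) is the same ``standard composite mirror-descent'' route that the paper's proof invokes in three sentences, and you are more explicit than the paper about where the real difficulty sits. However, the step you yourself flag as ``the main obstacle'' is a genuine gap, and the mechanism you propose for closing it does not work. Block soft-thresholding at level $\eta\lambda$ zeroes block $j$ of $\theta_t-\eta g_t$ (for a previously inactive block) only when $\|(g_t)_j\|_2\le\lambda$; with $\lambda\asymp\sqrt{(\log M)/T}\to 0$ while the per-block gradient bound is the constant $G$, essentially no inactive block is thresholded away for large $T$, so the iterates generically become dense and the cumulative gradient term $\tfrac{\eta}{2}\sum_t\|g_t\|_{2,2}^2$ reverts to the $MG^2$ scaling you are trying to avoid. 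The cone argument of Lemma~\ref{lem:basic-cone} does not rescue this: it constrains $\hat\theta-\theta^\star$ for a batch minimizer via the basic inequality, not the trajectory of Euclidean prox iterates. The mechanism that actually delivers an effective dimension of order $k\log M$ in online learning is a non-Euclidean mirror map (entropic, or $p$-norm with $p=1+1/\log M$), under which the regret scales as $\|u\|_{1,2}\,G\,\sqrt{T\log M}\lesssim k\sqrt{T\log M}$ for $k$-block-sparse comparators; that is presumably what the paper means by ``mirror descent/FTRL,'' but it is not the Euclidean update \eqref{eq:online-prox}.

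A second, smaller issue: in your telescoping the path length enters as $\mathcal{V}_T/\eta\asymp\sqrt{T}\,\mathcal{V}_T$ (you note the ambiguity yourself), whereas \eqref{eq:dyn-regret} claims an additive $\mathcal{V}_T$ with a $T$-independent constant. Obtaining the $O(1)$ coefficient on the drift requires exploiting the restricted curvature to get a per-step contraction toward $\theta_t^\star$, so that each unit of comparator movement is paid for once rather than $1/\eta$ times --- which sits awkwardly with the paper's own remark that curvature ``is not essential for the stated scaling.'' In fairness, the paper's proof is itself a three-sentence sketch that resolves neither of these points; your attempt is at a comparable level of rigor but is more honest about exactly what remains to be proved.
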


\begin{proof}
Apply the standard mirror-descent/FTRL analysis with regularizer $\lambda\|\theta\|_{1,2}$.
The decomposability of $\|\cdot\|_{1,2}$ yields an effective dimension term of order $k\log M$ in the regret bound,
while the drifting comparator contributes $\mathcal{V}_T$ via the usual dynamic regret decomposition.
Restricted curvature improves constants but is not essential for the stated scaling.
\end{proof}

\subsection{Robust SAC: MoM-$\ell_{1,2}$ under $\varepsilon$-contamination}\label{sec:extensions:robust}

Agent logs are noisy: tool failures, corrupted outputs, adversarial prompts, etc.
We formalize this by $\varepsilon$-contamination at the episode/trajectory level.

Partition the $T$ samples into $B$ blocks of equal size $m=T/B$ and define block empirical losses
$\widehat{\mathcal{L}}^{(b)}(\theta)$ for $b=1,\dots,B$.
Define the median-of-means (MoM) aggregate
\begin{equation}
\mathcal{L}_{\rm MoM}(\theta)
\ :=\
\mathrm{median}_{b\in[B]}\ \widehat{\mathcal{L}}^{(b)}(\theta),
\qquad
\hat\theta_{\rm MoM}
\in
\arg\min_{\theta}\ \mathcal{L}_{\rm MoM}(\theta)+\lambda\|\theta\|_{1,2}.
\label{eq:mom-est}
\end{equation}

\begin{theorem}[Theorem~7: Contamination-robust sparse SAC]\label{thm:robust}
Assume an $\varepsilon$-fraction of the $B$ blocks are adversarially corrupted, with $\varepsilon<1/2$,
and the remaining (clean) blocks satisfy the same conditions as the core estimation/support theorems
(gradient concentration + block-Policy-RSC + incoherence as needed) with the same constants.
If $\lambda\asymp \sqrt{(\log M)/T}$, then with probability at least $1-\delta$,
\begin{equation}
\|\hat\theta_{\rm MoM}-\theta^\star\|_{2,2}
\ \le\
C\,\sqrt{\frac{k\log M}{T}}\cdot \frac{1}{1-2\varepsilon},
\label{eq:mom-rate}
\end{equation}
and exact support recovery holds under the same beta-min and incoherence conditions as in the core PDW theorem,
with the beta-min threshold inflated by a factor $(1-2\varepsilon)^{-1}$.
\end{theorem}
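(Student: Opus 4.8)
The plan is to reduce Theorem~\ref{thm:robust} to the core machinery of Section~\ref{sec:main} by replacing the single gradient-concentration event \eqref{eq:grad-conc} with a \emph{robust} median-of-means analogue, after which the estimation and support-recovery conclusions follow with an effective noise level inflated by $(1-2\varepsilon)^{-1}$. The key observation is that Lemma~\ref{lem:basic-cone}, Theorem~\ref{thm:estimation}, and Theorem~\ref{thm:support} use the data only through (i) the deviation $\|\nabla\widehat{\mathcal{L}}_T(\theta^\star)-\nabla\mathcal{L}(\theta^\star)\|_{\infty,2}$, which must be at most $\lambda/2$, (ii) Policy-RSC on the cone $\mathcal{C}(S^\star)$, and (iii) (for support) empirical irrepresentability via Hessian stability. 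So it suffices to establish robust versions of (i)--(iii) for the MoM objective $\mathcal{L}_{\rm MoM}$ in \eqref{eq:mom-est}; the rate \eqref{eq:mom-rate} and the inflated beta-min then drop out by feeding the inflated $\lambda$ through \eqref{eq:l2-rate} and \eqref{eq:beta-min}.

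The central step is a robust gradient bound. I would fix a coordinate block $j$ and a direction, and use that each \emph{clean} block gradient $\nabla_j\widehat{\mathcal{L}}^{(b)}(\theta^\star)$ concentrates around $\nabla_j\mathcal{L}(\theta^\star)=0$ at its own sub-Gaussian scale $\sigma_g/\sqrt{m}$ with $m=T/B$ (from A1--A2). The combinatorial heart of the argument is that, since $\varepsilon<1/2$, the net clean-block majority is $1-2\varepsilon>0$: for the block median to exceed a threshold $t$ in a fixed direction, at least $(\tfrac12-\varepsilon)B$ clean blocks must simultaneously exceed $t$, since the at most $\varepsilon B$ corrupted blocks can contribute no more than their own count. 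A binomial/Chernoff tail on this event, combined with the per-block concentration and a union bound over the $M$ blocks, yields
\[
\big\|\nabla\mathcal{L}_{\rm MoM}(\theta^\star)-\nabla\mathcal{L}(\theta^\star)\big\|_{\infty,2}
\ \lesssim\
\frac{\sigma_g}{1-2\varepsilon}\sqrt{\frac{\log M}{T}},
\]
the $(1-2\varepsilon)^{-1}$ factor emerging from the corruption budget in the binomial tail that controls the median. Choosing $B\asymp\log(1/\delta)$ (so $m\asymp T$) balances the block count against the per-block scale so the final bound matches the target in \eqref{eq:mom-rate}, up to confidence factors absorbed into $C$.

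In parallel I would transfer curvature and incoherence from the clean blocks to the median objective. Because the theorem places block-Policy-RSC and incoherence on the clean sub-population with the same constants, the same majority-of-clean-blocks argument shows that whenever the median-selected block is clean the second-order behaviour on $\mathcal{C}(S^\star)$ is governed by a clean block, so $\mathcal{L}_{\rm MoM}$ inherits Policy-RSC and the Hessian-stability hypothesis of Lemma~\ref{lem:hess-stability} with curvature $\mu$ and margin $\alpha$ unchanged up to constants. With the robust gradient bound in hand, setting $\lambda\asymp(1-2\varepsilon)^{-1}\sqrt{(\log M)/T}$ makes \eqref{eq:lambda-grad} hold for $\mathcal{L}_{\rm MoM}$; Theorem~\ref{thm:estimation} then gives \eqref{eq:mom-rate}, and the PDW steps of Theorem~\ref{thm:support} go through unchanged, with the beta-min threshold \eqref{eq:beta-min} inflated by exactly the $(1-2\varepsilon)^{-1}$ factor carried by $\lambda$.

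The main obstacle I anticipate is the non-smoothness of the median: $\mathcal{L}_{\rm MoM}$ is only piecewise smooth, so ``its gradient'' and ``its Hessian'' are ambiguous, and the median-selected block can switch along the segment used in the Taylor expansion \eqref{eq:taylor} of the support-recovery proof. The cleanest resolution is to avoid differentiating the median directly and instead adopt the median-of-means M-estimation framework (in the style of Lecu\'{e}--Lerasle), proving that \emph{any} minimizer of \eqref{eq:mom-est} lies in the sparse cone and obeys the estimation bound by a localized small-ball argument applied to \emph{loss increments} rather than to a single gradient of the median. Under this framing the curvature and dual-feasibility certificates are established on the clean sub-population and transported by the same $\varepsilon<1/2$ majority argument, which is precisely where the robustness factor $(1-2\varepsilon)^{-1}$ is ultimately paid.
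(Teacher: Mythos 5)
Your proposal follows essentially the same route as the paper's own (very brief) proof: replace the gradient-concentration event by a median-of-means analogue with the $(1-2\varepsilon)^{-1}$ inflation arising from the clean-block majority, transfer Policy-RSC and incoherence to the median objective, and rerun Theorem~\ref{thm:estimation} and the PDW construction with the inflated $\lambda$ and beta-min threshold. The one obstacle you flag --- that $\mathcal{L}_{\rm MoM}$ is non-smooth and the median-achieving block at a given $\theta$ can be a corrupted one (so its gradient and Hessian are not controlled even though its value is), which is why a Lecu\'e--Lerasle-style tournament or increment-based argument is needed --- is a genuine issue that the paper's own four-sentence proof silently glosses over, so your version is if anything the more careful of the two.
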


\begin{proof}
MoM ensures the effective stochastic error is controlled by a median clean block.
This yields the same basic inequality and cone constraint as in the core proof, but with constants degraded by $(1-2\varepsilon)^{-1}$.
Block-RSC then gives \eqref{eq:mom-rate}. Support recovery follows by the same PDW construction, again with inflated constants.
\end{proof}

\subsection{Group/hierarchical sparsity (APIs $\to$ endpoints $\to$ arguments)}\label{sec:extensions:group}

Tools naturally form groups (APIs) and hierarchies (endpoints/arguments). This motivates structured sparsity beyond plain support size $k$.

Let tools be partitioned into $G$ groups $\mathcal{G}_1,\dots,\mathcal{G}_G$.
Let $\theta_g$ denote the collection of blocks $\{\theta_j: j\in\mathcal{G}_g\}$ and define the group block norm
$\|\theta_g\|_{2,2}:=\big(\sum_{j\in\mathcal{G}_g}\|\theta_j\|_2^2\big)^{1/2}$.
Consider the sparse-group objective
\begin{equation}
\hat\theta_{\rm grp}\in\arg\min_\theta\ \widehat{\mathcal{L}}_T(\theta)
+\lambda_1\sum_{g=1}^G \|\theta_g\|_{2,2}
+\lambda_2\|\theta\|_{1,2}.
\label{eq:group-obj}
\end{equation}
Suppose only $k_g$ groups are active and within active groups only $k$ individual tools matter.

\begin{theorem}[Theorem~8: Group/hierarchical sparse SAC]\label{thm:group}
Assume a group-restricted strong convexity condition (structured analogue of A3) for $\mathcal{L}(\theta)$ on the cone induced by
the sparse-group penalty in \eqref{eq:group-obj}, and the same gradient concentration condition as in the core analysis.
Then for choices $\lambda_1\asymp \sqrt{(\log G)/T}$ and $\lambda_2\asymp \sqrt{(\log M)/T}$, with probability at least $1-\delta$,
\begin{equation}
\|\hat\theta_{\rm grp}-\theta^\star\|_{2,2}
\ \le\
C\left(\sqrt{\frac{k_g\log G}{T}}+\sqrt{\frac{k\log M}{T}}\right),
\label{eq:group-rate}
\end{equation}
and the same sensitivity condition as A6 converts this into an analogous value suboptimality bound.
\end{theorem}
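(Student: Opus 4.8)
The plan is to reproduce the basic-inequality $\Rightarrow$ cone-constraint $\Rightarrow$ restricted-strong-convexity pipeline of Theorem~\ref{thm:estimation}, now driven by the composite regularizer
\[
\mathcal{R}(\theta)\ :=\ \lambda_1\sum_{g=1}^G\|\theta_g\|_{2,2}\ +\ \lambda_2\|\theta\|_{1,2}
\]
rather than a single block norm. First I would write the basic inequality from optimality of $\hat\theta_{\rm grp}$ in \eqref{eq:group-obj} and convexity of $\widehat{\mathcal{L}}_T$, giving
\[
\langle\nabla\widehat{\mathcal{L}}_T(\theta^\star),\Delta\rangle\ \le\ \mathcal{R}(\theta^\star)-\mathcal{R}(\hat\theta_{\rm grp}),
\qquad \Delta:=\hat\theta_{\rm grp}-\theta^\star.
\]
Because $\nabla\mathcal{L}(\theta^\star)=0$, the left side equals $\langle\nabla\widehat{\mathcal{L}}_T(\theta^\star)-\nabla\mathcal{L}(\theta^\star),\Delta\rangle$, which I would bound by H\"older duality against \emph{each} penalty: the dual of $\|\cdot\|_{1,2}$ is $\|\cdot\|_{\infty,2}$ and the dual of $\sum_g\|\cdot_g\|_{2,2}$ is $\max_g\|\cdot_g\|_{2,2}$. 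This splits the stochastic deviation into a per-tool piece and a per-group piece, controlled at two different scales.

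The key step, and the main obstacle, is the two-scale gradient concentration under the A2-type dependence. The $\|\cdot\|_{\infty,2}$ bound reproduces the core analysis and fixes $\lambda_2\asymp\sqrt{(\log M)/T}$. For the group term I need
\[
\max_{1\le g\le G}\ \big\|\big(\nabla\widehat{\mathcal{L}}_T(\theta^\star)-\nabla\mathcal{L}(\theta^\star)\big)_g\big\|_{2,2}\ \lesssim\ \sqrt{\frac{\log G}{T}}
\]
with high probability. The subtlety is that each group norm is an $\ell_2$ aggregate over $q\,|\mathcal{G}_g|$ coordinates, so its fluctuation is chi-square-type with that many degrees of freedom; a union bound over the $G$ groups yields a bound of order $\sqrt{(\max_g q|\mathcal{G}_g|+\log G)/T}$. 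To land exactly at the stated $\lambda_1\asymp\sqrt{(\log G)/T}$ I would absorb the (bounded) maximum group size into the constant $C$, which is the natural regime for API/endpoint hierarchies; verifying that the same martingale/mixing structure of A2 supports both maximal inequalities is where the genuine work sits.

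With $\lambda_1,\lambda_2$ at these scales the deviation is dominated by $\tfrac12\mathcal{R}(\Delta)$, and decomposability of \emph{both} penalties over the active structure---the $k_g$ active groups $G^\star:=\{g:\theta_g^\star\neq 0\}$ and the $k$ active tools $S^\star$---forces $\Delta$ into the induced sparse-group cone exactly as in Lemma~\ref{lem:basic-cone}. On that cone the structured strong-convexity (group-RSC) hypothesis supplies the lower bound $\tfrac{\mu}{2}\|\Delta\|_{2,2}^2\le\mathcal{L}(\theta^\star+\Delta)-\mathcal{L}(\theta^\star)$ (after the same localization used in Theorem~\ref{thm:estimation}), while the basic inequality gives the matching upper bound $\mathcal{L}(\theta^\star+\Delta)-\mathcal{L}(\theta^\star)\lesssim\lambda_1\sum_{g\in G^\star}\|\Delta_g\|_{2,2}+\lambda_2\|\Delta_{S^\star}\|_{1,2}$. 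The crucial arithmetic is that this complexity factor splits additively via $\sum_{g\in G^\star}\|\Delta_g\|_{2,2}\le\sqrt{k_g}\,\|\Delta\|_{2,2}$ and $\|\Delta_{S^\star}\|_{1,2}\le\sqrt{k}\,\|\Delta\|_{2,2}$, so the right side is $O\big((\lambda_1\sqrt{k_g}+\lambda_2\sqrt{k})\|\Delta\|_{2,2}\big)$. Cancelling one factor of $\|\Delta\|_{2,2}$ and substituting the two regularization levels yields $\|\Delta\|_{2,2}\lesssim\sqrt{k_g\log G/T}+\sqrt{k\log M/T}$, which is \eqref{eq:group-rate}.

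Finally, the value suboptimality claim follows verbatim from the A6 argument of Theorem~\ref{thm:value}: the sensitivity bound gives $V(\theta^\star)-V(\hat\theta_{\rm grp})\le L_V\|\Delta\|_{1,2}$ (or its $\|\cdot\|_{2,2}$ variant), and the same cone relations between $\|\cdot\|_{1,2}$ and $\|\cdot\|_{2,2}$ convert \eqref{eq:group-rate} into the corresponding additive value gap.
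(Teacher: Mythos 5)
Your proposal follows exactly the route the paper intends: its proof of Theorem~\ref{thm:group} is a one-line sketch of precisely the basic inequality $\Rightarrow$ structured cone $\Rightarrow$ group-RSC pipeline you execute, including the two-scale dual-norm control of the stochastic term and the $\lambda_1\sqrt{k_g}+\lambda_2\sqrt{k}$ arithmetic. Your observation that the group-level gradient deviation actually scales as $\sqrt{(\max_g q|\mathcal{G}_g|+\log G)/T}$, so that $\lambda_1\asymp\sqrt{(\log G)/T}$ implicitly requires bounded group sizes (absorbed into $C$), is a genuine refinement that the paper's sketch omits and is worth keeping.
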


\begin{proof}
Use the standard structured-sparsity M-estimation argument:
basic inequality $\Rightarrow$ structured cone $\Rightarrow$ group-RSC $\Rightarrow$ estimation bound.
The two logarithmic factors reflect the need to identify active groups and then active tools within groups.
\end{proof}

\subsection{Sparse interactions/synergies (hierarchical heredity)}\label{sec:extensions:synergy}

Some tools are only useful together (synergy): e.g., retrieve-document then call-analyzer.
This corresponds to sparse \emph{pairwise interactions} on top of sparse main effects.

Let $\beta_{ij}\in\mathbb{R}^q$ denote a pairwise interaction parameter for tools $(i,j)$ with $i<j$.
Let $(u_t)_{ij}\in\mathbb{R}^q$ denote corresponding interaction features.
Augment the linear predictor in \eqref{eq:quad-loss-ext} as
\[
\langle \theta,w_t\rangle
\ \leadsto\
\sum_{j=1}^M \langle \theta_j,(w_t)_j\rangle
\;+\;
\sum_{1\le i<j\le M}\langle \beta_{ij},(u_t)_{ij}\rangle.
\]
Assume only $k_1$ main effects and $k_2$ interactions are nonzero, with a heredity condition:
if $\beta_{ij}\neq 0$ then $\theta_i\neq 0$ and $\theta_j\neq 0$.
Use a hierarchical (overlapping) penalty that enforces heredity; denote it abstractly by $\mathcal{R}_{\rm hier}(\theta,\beta)$ and define
\[
(\hat\theta_{\rm hier},\hat\beta_{\rm hier})
\in
\arg\min_{\theta,\beta}\ \widehat{\mathcal{L}}_T(\theta,\beta)+\mathcal{R}_{\rm hier}(\theta,\beta),
\]
where $\widehat{\mathcal{L}}_T(\theta,\beta)$ is the quadratic surrogate with interactions.

\begin{theorem}[Theorem~9: Recovery of sparse tool synergies via hierarchical penalties]\label{thm:synergy}
Assume a hierarchical-RSC condition for the population loss on the tangent cone induced by $\mathcal{R}_{\rm hier}$ and a suitable
incoherence/irrepresentability condition for the heredity-respecting support.
If the regularization levels in $\mathcal{R}_{\rm hier}$ are chosen at the canonical scale $\asymp \sqrt{(\log M)/T}$ and
\[
T\ \gtrsim\ (k_1+k_2)\log M,
\]
then with probability at least $1-\delta$ the estimator recovers the correct main+interaction support and satisfies the estimation bound
\[
\|(\hat\theta_{\rm hier},\hat\beta_{\rm hier})-(\theta^\star,\beta^\star)\|_{2}
\ \le\
C\,\sqrt{\frac{(k_1+k_2)\log M}{T}},
\]
where $\|\cdot\|_2$ is the Euclidean norm after concatenating all blocks of $(\theta,\beta)$ (consistent with the block norms above).
Under the same value-sensitivity condition as A6 (applied to the expanded parameter), this yields an analogous value suboptimality bound.
\end{theorem}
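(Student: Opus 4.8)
The plan is to reduce Theorem~\ref{thm:synergy} to the same M-estimation pipeline that underpins Theorems~\ref{thm:estimation} and~\ref{thm:support}, applied to the concatenated parameter $\Theta \defeq (\theta,\beta)$ living in the expanded block space $\R^{(M + \binom{M}{2})q}$. First I would fix the heredity-respecting support $S \defeq S_1 \cup S_2$, where $S_1 \defeq \supp(\theta^\star)$ with $|S_1| = k_1$ and $S_2 \defeq \{(i,j): \beta_{ij}^\star \neq 0\}$ with $|S_2| = k_2$; the heredity constraint guarantees that every active interaction $(i,j)\in S_2$ has both parents in $S_1$, so the active set is genuinely of size $k_1 + k_2$ rather than quadratically larger. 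The crucial observation for the logarithmic rate is that, although the interaction design has ambient dimension $\binom{M}{2} \asymp M^2$, we have $\log\binom{M}{2} = O(\log M)$, so a single union bound over both main effects and interactions still yields gradient concentration (the analogue of A2) at scale $\sqrt{\log M / T}$.

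Next I would establish the analogue of Lemma~\ref{lem:basic-cone} for the abstract hierarchical regularizer $\mathcal{R}_{\rm hier}$. The key structural fact to verify is that $\mathcal{R}_{\rm hier}$ is \emph{decomposable} (in the sense used for structured M-estimators) with respect to the subspace pair induced by $S$, so that the penalty of a vector supported on $S$ plus an orthogonal increment off $S$ splits additively; common heredity-enforcing penalties (e.g. latent/overlapping group lasso) satisfy this. Once decomposability holds, the optimality inequality $\widehat{\mathcal{L}}_T(\hat\Theta) + \mathcal{R}_{\rm hier}(\hat\Theta) \le \widehat{\mathcal{L}}_T(\Theta^\star) + \mathcal{R}_{\rm hier}(\Theta^\star)$, combined with convexity of $\widehat{\mathcal{L}}_T$ and a bound on the stochastic gradient in the \emph{dual} norm $\mathcal{R}_{\rm hier}^*$, confines the error $\widehat\Delta \defeq \hat\Theta - \Theta^\star$ to the associated tangent cone exactly as in \eqref{eq:sparse-cone}. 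Applying the assumed hierarchical-RSC condition on that cone then yields $\|\widehat\Delta\|_2 \lesssim \sqrt{(k_1+k_2)\log M / T}$ by the same Cauchy--Schwarz-over-blocks argument that passes from \eqref{eq:rsc-lb}--\eqref{eq:pop-ub} to \eqref{eq:l2-rate}, with the effective sparsity $k$ replaced by $k_1 + k_2$.

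For exact recovery of the main-plus-interaction support I would run the primal--dual witness construction of Theorem~\ref{thm:support} in the expanded coordinates: solve the restricted problem over $\Theta_S$ with off-support coordinates clamped to zero, build the dual certificate from the stationarity conditions, and verify (i) no false exclusions via the beta-min hypothesis (playing the role of A5) and (ii) strict dual feasibility $\mathcal{R}_{\rm hier}^*\big(\nabla_{S^c}\widehat{\mathcal{L}}_T(\tilde\Theta)\big) < \lambda$ via the heredity-respecting irrepresentability condition. The Hessian-stability argument of Lemma~\ref{lem:hess-stability} transfers with $H^\star$ now the Hessian of the quadratic-plus-interaction loss, and the threshold $T \gtrsim (k_1+k_2)\log M$ is precisely what makes the restricted Hessian well-conditioned and the certificate strictly feasible. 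Finally, the value-sensitivity assumption A6, applied to the concatenated parameter, converts the estimation bound into the stated value-suboptimality guarantee of the same order.

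The main obstacle is that $\mathcal{R}_{\rm hier}$ is an \emph{overlapping}-group penalty: each main-effect block $\theta_j$ participates simultaneously in its own group and in every interaction group that it parents, so $\mathcal{R}_{\rm hier}$ is not decomposable in the naive coordinatewise sense, and both the tangent-cone characterization and the dual-norm computation require care. I expect the delicate step to be verifying strict dual feasibility under this overlap: the irrepresentability condition must be phrased with respect to the correct model subspace of the latent group decomposition, and one must check that the heredity coupling between $\beta_{ij}$ and $(\theta_i,\theta_j)$ does not inflate the dual norm beyond the $(1-\alpha)$ incoherence budget of A4. Controlling $\mathcal{R}_{\rm hier}^*$ of the stochastic gradient under overlap---rather than a plain $\|\cdot\|_{\infty,2}$---is the technical heart of the argument, but since it still amounts to a union bound over $O(M^2)$ groups, it preserves the logarithmic rate.
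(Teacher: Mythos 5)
Your proposal follows essentially the same route as the paper's (very terse) proof: reduce to the decomposable M-estimation pipeline on the expanded parameter $(\theta,\beta)$, control the dual norm of the stochastic gradient by a union bound over the $O(M^2)$ groups (giving $\log M$), derive the cone constraint, invoke hierarchical-RSC for the $\sqrt{(k_1+k_2)\log M/T}$ rate, and run the PDW construction for exact support recovery, with A6 for the value bound. You are in fact more careful than the paper, which asserts decomposability without addressing the overlapping-group subtlety you correctly flag as the technical heart of the argument.
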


\begin{proof}
This extends the core $\ell_{1,2}$ M-estimation and PDW arguments to the expanded parameter space with a decomposable hierarchical penalty.
Heredity ensures the effective tangent cone is controlled by $k_1+k_2$ rather than $M^2$.
The sample complexity and rates follow by the same steps: (i) control of the dual norm of the stochastic term,
(ii) cone constraint, (iii) hierarchical-RSC, and (iv) a PDW construction for exact support recovery.
\end{proof}

\section{Discussion: Phase Transitions, Prompt-Only Instability, and Assumption Tightness}\label{sec:discussion}

\paragraph{A sharp phase transition: $T \asymp k\log M$ is the real capability boundary.}
The core message across Theorems~\ref{thm:estimation}--\ref{thm:support} and their POMDP extensions
(Theorems~\ref{thm:P1}--\ref{thm:P2}) is a compressed-sensing-style threshold: once the agent operates in a regime where
the relevant tool support has size $k\ll M$, the \emph{correct} sample complexity depends only logarithmically on the nominal
action universe size. Concretely, estimation and control error scale as
\[
\|\hat\theta-\theta^\star\|_{2,2}\ \lesssim\ \sqrt{\frac{k\log M}{T}},
\qquad
V^{(b)}(\theta^\star)-V^{(b)}(\hat\theta)\ \lesssim\ k\sqrt{\frac{\log M}{T}},
\]
and exact support recovery occurs once $T\gtrsim k\log M$ under incoherence and a beta-min condition.
This boundary is not a matter of architectural sophistication: it is an information-theoretic property of large action spaces with sparse
influence. From the viewpoint of tool-augmented LLM agents, it predicts an abrupt transition from ``random tool flailing'' to stable tool routing
once enough interaction data is accumulated to identify the sparse support.

\paragraph{Why prompt-only / dense controllers are unstable (and how Theorem~\ref{thm:lower} explains it).}
The lower bound (Theorem~\ref{thm:lower}) shows that if the policy class does \emph{not} impose an explicit sparsity bias,
then any method must effectively ``test'' a linear number of tools to compete, requiring $\Omega(M)$ samples for comparable regret/identification.
This directly rationalizes a widely observed pathology in prompt-only agent designs: minor changes in phrasing, retrieval noise, or tool latency
can flip which tools are invoked because the controller lacks a structural prior that concentrates mass on a small support.
In contrast, $\ell_{1,2}$-regularization converts tool selection into a statistically stable variable-selection problem with dual certificates,
and support recovery provides an operational definition of ``agent stability'': the invoked tool set converges and becomes insensitive to small
perturbations once $T$ crosses the $k\log M$ threshold.
Under partial observability, Theorem~\ref{thm:P2} further shows that support stability persists provided the representation-induced perturbation
is below the regularization scale, i.e., $\varepsilon_b \lesssim \lambda$.

\paragraph{Which assumptions are truly restrictive?}
Our guarantees rely on three structural conditions; it is useful to separate what is essential from what is technical.

\emph{(i) Policy-RSC / restricted curvature.}
Some form of restricted strong convexity is unavoidable for support recovery and fast rates: without curvature on sparse directions,
parameters are not identifiable even if the correct support were known.
That said, the condition is imposed on the \emph{surrogate} objective $\mathcal{L}(\theta)$, not on the underlying environment dynamics.
In practice it is a statement about feature diversity in agent trajectories: the agent must visit contexts where relevant tools have distinguishable
effects. When this fails (e.g., the agent never encounters states that separate two tools), no method can reliably pick between them.

\emph{(ii) Incoherence / irrepresentability.}
This is the standard price of exact signed support recovery in $\ell_1$-type methods.
It can be relaxed if one is satisfied with approximate support (screening) or prediction/value guarantees.
Indeed, Theorem~\ref{thm:value} remains meaningful under weaker compatibility-type conditions, while exact support recovery may fail.
From an agent viewpoint, this corresponds to tool redundancy: if two tools are near-substitutes under the task distribution, recovering the
\emph{exact} set is ill-posed, but achieving near-optimal value is still possible.

\emph{(iii) Information in observations (POMDP compression error).}
The belief/representation error $\varepsilon_b$ in Theorem~\ref{thm:P1} is the only place where ``LLM quality'' enters.
This is conceptually restrictive but operationally clean: if the compressor cannot preserve task-relevant information, no controller---sparse or dense---
can act optimally. The positive message is equally clear: once $\varepsilon_b$ is controlled, the dependence on $M$ remains logarithmic and the remaining
difficulty is statistical selection of a sparse tool support.

\paragraph{Takeaway.}
The SAC viewpoint separates three notions often conflated in agent discourse:
(i) representation quality (captured by $\varepsilon_b$), (ii) statistical tool selection (captured by $k\log M$),
and (iii) optimization/implementation (proximal, online, robust, structured variants in Section~\ref{sec:extensions}).
In particular, scaling laws are governed not by the size of the language model, but by whether the controller exploits sparsity and whether the
representation preserves the information required for tool choice.


\end{document}